\documentclass{article}

\usepackage{arxiv}

\usepackage{pifont}%
\usepackage[T1]{fontenc}    %
\usepackage{hyperref}       %
\usepackage{url}            %
\usepackage{booktabs}       %
\usepackage{amsfonts}       %
\usepackage{nicefrac}       %
\usepackage{microtype}      %
\usepackage{lipsum}
\usepackage{graphicx}
\usepackage{dsfont}
\usepackage{xcolor}
\usepackage{wrapfig}
\usepackage{amssymb}
\usepackage{natbib}
\usepackage{subcaption}
\usepackage{tcolorbox}
\usepackage{enumitem}
\usepackage{xcolor}

\usepackage{tikz}
\newcommand{\tikzxmark}{%
\tikz[scale=0.23] {
    \draw[line width=0.7,line cap=round] (0,0) to [bend left=6] (1,1);
    \draw[line width=0.7,line cap=round] (0.2,0.95) to [bend right=3] (0.8,0.05);
}}

\graphicspath{ {./images/} }

\usepackage{amsmath,amsfonts,bm}

\def\eqref#1{equation~\ref{#1}}

\def\1{\bm{1}}

\DeclareMathAlphabet{\mathsfit}{\encodingdefault}{\sfdefault}{m}{sl}
\SetMathAlphabet{\mathsfit}{bold}{\encodingdefault}{\sfdefault}{bx}{n}

\DeclareMathOperator*{\argmin}{arg\,min}

\newtheorem{theorem}{Theorem}
\newtheorem{proof}{Proof}
\newtheorem{remark}{Remark}

\title{
  SMILE: Zero-Shot Sparse Mixture of Low-Rank Experts Construction From Pre-Trained Foundation Models
}

\author{
    Anke Tang\textsuperscript{1}, 
    Li Shen\textsuperscript{2}, 
    Yong Luo\textsuperscript{1}, 
    Shuai Xie\textsuperscript{3}, 
    Han Hu\textsuperscript{4}, 
    Lefei Zhang\textsuperscript{1}, 
    Bo Du\textsuperscript{1}, 
    Dacheng Tao\textsuperscript{5} \\
    \textsuperscript{1}Wuhan University, \textsuperscript{2}Sun Yat-sen University, \textsuperscript{3}JD Explore Academy,\\
    \textsuperscript{4}Beijing Institute of Technology,
    \textsuperscript{5}Nanyang Technological University \\
    \textsuperscript{1}\texttt{\{anketang,luoyong,zhanglefei,dubo\}@whu.edu.cn},
    \textsuperscript{2}\texttt{mathshenli@gmail.com},\\
    \textsuperscript{3}\texttt{xieshuai@jd.com},
    \textsuperscript{4}\texttt{hhu@bit.edu.cn},
    \textsuperscript{5}\texttt{dacheng.tao@ntu.edu.sg}
}
\date{}

\begin{document}
\maketitle

\vspace{-20pt}
\begin{abstract}
  Deep model training on extensive datasets is increasingly becoming cost-prohibitive, prompting the widespread adoption of deep model fusion techniques to leverage knowledge from pre-existing models.
  From simple weight averaging to more sophisticated methods like AdaMerging, model fusion effectively improves model performance and accelerates the development of new models.
  However, potential interference between parameters of individual models and the lack of interpretability in the fusion progress remain significant challenges.
  Existing methods often try to resolve the parameter interference issue by evaluating attributes of parameters, such as their magnitude or sign, or by parameter pruning.
  In this study, we begin by examining the fine-tuning of linear layers through the lens of subspace analysis and explicitly define parameter interference as an optimization problem to shed light on this subject.
  Subsequently, we introduce an innovative approach to model fusion called zero-shot Sparse MIxture of Low-rank Experts (SMILE) construction, which allows for the upscaling of source models into an MoE model without extra data or further training.
  Our approach relies on the observation that fine-tuning mostly keeps the important parts from the pre-training, but it uses less significant or unused areas to adapt to new tasks. Also, the issue of parameter interference, which is intrinsically intractable in the original parameter space, can be managed by expanding the dimensions.
  We conduct extensive experiments across diverse scenarios, such as image classification and text generation tasks, using full fine-tuning and LoRA fine-tuning, and we apply our method to large language models (CLIP models, Flan-T5 models, and Mistral-7B models), highlighting the adaptability and scalability of SMILE.
  For full fine-tuned models, about 50\% additional parameters can achieve around 98-99\% of the performance of eight individual fine-tuned ViT models, while for LoRA fine-tuned Flan-T5 models, maintaining 99\% performance with only 2\% extra parameters.
  Code is available at \url{https://github.com/tanganke/fusion_bench}.
\end{abstract}

\keywords{Mixture of Experts \and Model Fusion \and Subspace Decomposition \and Large Language Model}

\section{Introduction}
\label{sec:introduction}

In recent years, the field of deep learning has witnessed an exponential growth in model sizes and dataset scales, making the training of large-scale deep models on extensive datasets increasingly cost-prohibitive, both in terms of financial resources and environmental impact~\citep{minaeeLargeLanguageModels2024,hadiLargeLanguageModels2024}.
Deep model fusion techniques have emerged as a promising solution, allowing the integration of knowledge from pre-existing models without the need for extensive retraining~\citep{liDeepModelFusion2023,zhengLearnModelFineTuning2023,yangModelMergingLLMs2024}.
This approach not only reduces computational costs but also enables the creation of more robust and versatile models by combining the strengths of multiple models.

Following the categorization in~\citet{tangFusionBenchComprehensiveBenchmark2024}, we classify model fusion methods into three main categories: model ensemble methods, model merging methods, and model mixing methods.
Model ensemble techniques aggregate the predictions from several models to enhance performance~\citep{sagiEnsembleLearningSurvey2018}. 
While resource-intensive in terms of memory and computation, it improves knowledge distillation training~\citep{wanKnowledgeFusionLarge2024,wanFuseChatKnowledgeFusion2024}.
Model merging methods, on the other hand, combine the parameters of multiple models into a single model, often through weighted averaging or parameter alignment~\citep{michaelmatenaMergingModelsFisherWeighted2022,xisenjinDatalessKnowledgeFusion2023}.
Model mixing methods involve the integration of multiple models through gating mechanisms or depth concatenation, allowing for more flexible and adaptive fusion strategies~\citep{komatsuzakiSparseUpcyclingTraining2023,kimSOLAR107B2023}.
These methods are particularly effective in multi-task learning scenarios, where the merged model can simultaneously perform multiple tasks.

However, despite the promising advancements in model fusion, several critical challenges persist, hindering the full realization of its potential.
A primary concern is the potential interference between parameters of different models, which leads to suboptimal performance.
Additionally, the lack of interpretability in the fusion process remains a significant hurdle, as current insights are largely confined to heuristic observations or simplified assumptions, such as linear mode connectivity, parameter signs or importance~\citep{ainsworthGitReBasinMerging2023,georgestoicaZipItMergingModels2023,yadavResolvingInterferenceWhen2023,yuLanguageModelsAre2023}.
Understanding how parameters are merged is crucial for building trust in the merged models and for further improving fusion techniques.
These challenges are particularly pronounced in complex, high-dimensional, non-linear model architectures, where the interactions between parameters can be extremely intricate and non-intuitive.

\begin{wrapfigure}{r}{0.4\textwidth}
  \centering
  \vspace{-10pt}
  \includegraphics[width=\linewidth]{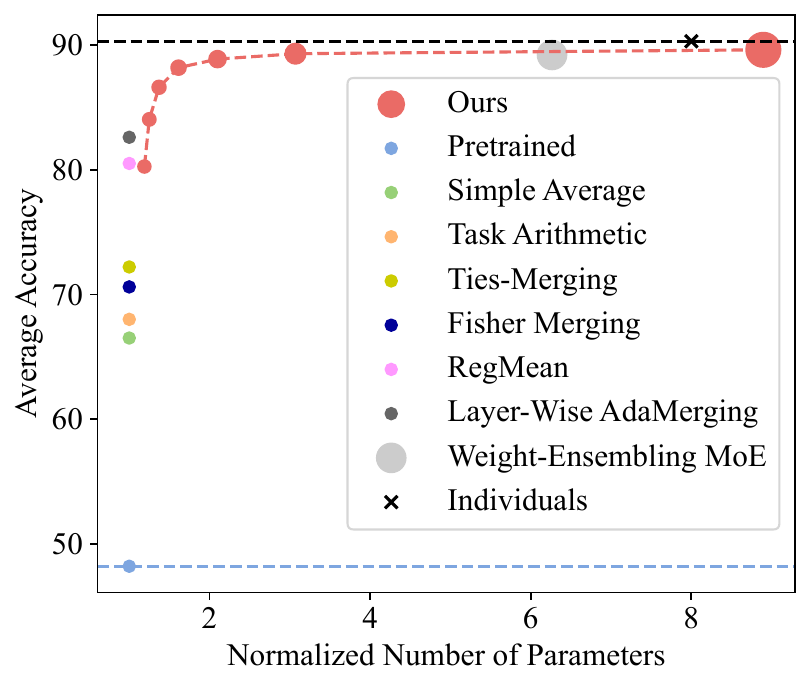}
  \caption{
    Multi-task model fusion experiment on eight image classification tasks using CLIP-ViT-B/32 models.
    Here we set $k_{gate}=16$ and $k$ is varied from 4 to 128 to investigate the trade-off between performance and model size.
  }
  \label{fig:clip-vit-b-32_scatter}
  \vspace{-15pt}
\end{wrapfigure}

Instead of relying on heuristic methods or simplified assumptions, we propose a novel subspace perspective on understanding and addressing the parameter interference problem in this study.
We first examine the fine-tuning process in linear layers through the lens of subspace analysis using matrix decomposition in Section~\ref{sec:revisiting_fine_tuning}.
This allows us to decompose the prediction of a fine-tuned model into distinct components, encompassing the pre-trained knowledge and task-specific adaptation. This approach provides insights into how models adapt to downstream tasks while preserving pre-trained knowledge.
Drawing from experimental observations, we build a more comprehensive understanding of fine-tuning, we further formulate parameter interference as an optimization problem in Section~\ref{sec:parameter_interference}, providing a more rigorous and measurable perspective.

Based on our insights, we introduce an innovative approach called zero-shot Sparse MIxture of Low-rank Experts (SMILE) construction, enhancing existing source models into a more versatile MoE model.
The zero-shot aspect of our approach is particularly noteworthy, as it facilitates the immediate deployment of fused models in new environments or tasks, drastically minimizing the time and resources typically required for model adaptation.

The effectiveness of our proposed method is rooted in two key observations derived from our subspace analysis.
Firstly, we found that the fine-tuning largely preserves the most important pre-trained weights and primarily utilizes less significant or previously unused dimensions of the parameter space to adapt to new tasks.
This preservation ensures that the critical pre-training knowledge encoded in the original models is not lost during fine-tuning and implies that the parameter subspace required to accommodate new knowledge may vary from task to task.
Secondly, we found that while parameter interference is inherently difficult to address in the original parameter space, it becomes more manageable when we increase the model's dimensionality. This expansion creates additional `room' for task-specific parameter updates to coexist without mutual interference.

We conducted extensive experiments across various tasks and models in both the vision and language domains, utilizing traditional full fine-tuning as well as Low-Rank Adaptation (LoRA)~\citep{huLoRALowRankAdaptation2021}. The results show that for models that undergo full fine-tuning, adding approximately 50\% more parameters allows us to achieve around 98-99\% of the performance of eight individual fine-tuned models. In the case of LoRA fine-tuned models, maintaining 99\% of the individual performance requires only a 2\% increase in parameters. This method also offers trade-offs between performance and model size, as illustrated in Figure~\ref{fig:clip-vit-b-32_scatter}, where we vary the rank $k$ of local experts.

To summarize, our contributions in this study are as follows:
\begin{itemize}[noitemsep,nolistsep]
  \item We provide a novel subspace perspective on the fine-tuning process, shedding light on how models adapt to new tasks while preserving pre-trained knowledge. In addition, We formulate the parameter interference problem as an optimization problem, providing a more rigorous and measurable perspective on this issue.
  \item We introduce a zero-shot Sparse Mixture of Low-Rank Experts (SMILE) construction approach, enabling the fusion of existing models into more unified versatile SMILE models. We also discuss the complexity of our method, highlighting its potential for broader applications in deep learning research and practice.
  \item We demonstrate the effectiveness of our method through extensive experiments on a variety of tasks and setups, showcasing its superior performance and efficiency compared to existing model fusion techniques.
\end{itemize}

\section{Rethinking Model Fine-Tuning From a Subspace Perspective}
\label{sec:revisiting_fine_tuning}

In this study, we aim to construct a unified versatile model from multiple fine-tuned models, which can perform multiple tasks or handle inputs from multiple domains simultaneously. We denote the number of fine-tuned models as $T$.
Before we delve into the proposed method's details, we gain insights into the fine-tuning process from a singular value decomposition (SVD) subspace perspective.
In this section, we aim to (1) investigate and locate the task information in the fine-tuned weights $W_{ft}$, and (2) understand how it is related to the pre-trained weights $W$.

Consider a linear layer of the pre-trained model with weight matrix $W \in \mathbb{R}^{m \times n}$ and bias vector $b \in \mathbb{R}^{m}$.
After full fine-tuning on a downstream task, the weight matrix and bias vector are updated to $W_{ft}$ and $b_{ft}$, respectively.
To achieve a deeper understanding of these updates, we need to employ mathematical tools that allow us to decompose the parameter space into distinct ranges of importance, i.e. subspaces.
We state the following theorem.
\begin{theorem}
  \label{thm:matrix_subspace}
  Given two sets of orthonormal vectors $\{u_i\}_{i=1}^{p}\subset \mathbb{R}^m$ and $\{v_i\}_{i=1}^{q}\subset \mathbb{R}^n$, $1 \leq p\leq m$ and $1\leq q\leq n$, the set of matrices $\{u_i v_j^T\}_{i=1,j=1}^{p,q}$ forms an orthonormal basis for a subspace of $\mathbb{R}^{m \times n}$ with dimension $pq$.
\end{theorem}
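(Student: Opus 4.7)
The plan is to equip $\mathbb{R}^{m\times n}$ with the Frobenius inner product $\langle A,B\rangle_F = \mathrm{tr}(A^T B)$ and verify directly that the collection $\{u_i v_j^T\}_{i,j}$ is an orthonormal family with respect to this inner product. Once orthonormality is established, linear independence follows automatically, and the span then gives a subspace of dimension exactly $pq$.

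First, I would recall (or quickly restate) that the Frobenius inner product makes $\mathbb{R}^{m\times n}$ into a $mn$-dimensional Euclidean space, and note the trace identity $\mathrm{tr}(XY) = \mathrm{tr}(YX)$, which is the only algebraic fact I will really need. Then, for the core computation, I would fix indices $i,k \in \{1,\dots,p\}$ and $j,l \in \{1,\dots,q\}$ and expand
\begin{equation*}
  \langle u_i v_j^T,\, u_k v_l^T\rangle_F
  = \mathrm{tr}\bigl( (u_i v_j^T)^T (u_k v_l^T) \bigr)
  = \mathrm{tr}\bigl( v_j u_i^T u_k v_l^T \bigr).
\end{equation*}
Using the cyclic property of trace and the fact that $u_i^T u_k$ is a scalar, this collapses to $(u_i^T u_k)(v_l^T v_j) = \delta_{ik}\delta_{jl}$ by the orthonormality hypotheses on the $u$'s and $v$'s.

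Second, I would conclude orthonormality of the family $\{u_i v_j^T\}$ in $\mathbb{R}^{m\times n}$ and therefore its linear independence. The span $\mathcal{S} = \mathrm{span}\{u_i v_j^T : 1\le i\le p,\,1\le j\le q\}$ is a subspace of $\mathbb{R}^{m\times n}$ containing $pq$ linearly independent vectors, so $\dim\mathcal{S}=pq$, and the family is by construction an orthonormal basis of $\mathcal{S}$. This also matches the expected bound $pq \le mn$ coming from $p\le m$ and $q\le n$.

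Honestly, there is no real obstacle here; the statement is essentially a restatement of the fact that rank-one outer products of orthonormal vectors form an orthonormal family under Frobenius inner product (the same identity that underlies SVD). The only mild care required is the bookkeeping with the double indices and making sure the cyclic trace rearrangement is written cleanly; I would lay out the Kronecker-delta calculation explicitly so the reader sees that both orthonormality assumptions are used exactly once, and then invoke the standard fact that any orthonormal set is a basis of its span.
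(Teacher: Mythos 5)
Your proposal is correct and follows essentially the same route as the paper: verify orthonormality of the outer products via the trace/cyclic-property computation under the Frobenius inner product, then deduce linear independence and conclude the family is an orthonormal basis of its $pq$-dimensional span. The only cosmetic difference is that you invoke the standard fact that orthonormal sets are linearly independent, whereas the paper spells out that implication by taking inner products against a putative vanishing linear combination.
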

\begin{proof}
  For simplicity, let $x_{ij} = u_i v_j^T$.
  The Frobenius inner product of two matrices $x_{ab}$ and $x_{cd}$ is defined as
  \begin{equation}
    \langle x_{ab}, x_{cd} \rangle = \operatorname{tr} \left(u_a v_b^T(u_c v_d^T)^T \right) = \operatorname{tr} \left(u_a v_b^T v_d u_c^T\right) \in \mathbb{R}.
  \end{equation}

  \textbf{Orthonormality}: we consider three cases:
  \begin{enumerate}
    \item If $a = c$ and $b = d$, then $\langle x_{ab}, x_{cd} \rangle = \operatorname{tr} \left( u_a u_a^T \right) = u_a^\top u_a = 1$.
    \item If $b \neq d$, then $v_b^T v_d = 0$ and $\langle x_{ab}, x_{cd} \rangle = 0$.
    \item If $b = d$ and $a \neq c$, then $v_b^T v_d = 1$ and $\langle x_{ab}, x_{cd} \rangle =  \operatorname{tr} (u_a u_c^\top) = u_a^\top u_c = 0$.
  \end{enumerate}
  Thus, $\{x_{ij}\}_{i,j=1}^{p,q}$ is orthonormal.

  \textbf{Linear Independence}:
  assume there exists a nonzero matrix $\alpha \in \mathbb{R}^{p\times q}$ such that $\sum_{i=1}^{p} \sum_{j=1}^{q} \alpha_{ij} x_{ij} = 0$. For any $a\in[p]$and $b\in[q]$, take the inner product of both sides with $x_{ab}$. We obtain the following:
  \begin{equation}
    \left\langle \sum_{i=1}^{p} \sum_{j=1}^{q} \alpha_{ij} x_{ij}, x_{ab} \right\rangle = \langle 0, x_{ab} \rangle = 0.
  \end{equation}
  By the linearity of the inner product and the orthogonality, we proved:
  \begin{equation}
    \sum_{i=1}^{p} \sum_{j=1}^{q} \alpha_{ij} \left\langle x_{ij}, x_{ab} \right\rangle = \alpha_{ab} \left\langle x_{ab}, x_{ab} \right\rangle = \alpha_{ab} = 0.
  \end{equation}
  Since this holds for any $a$ and $b$, we conclude that all $\alpha_{ij} = 0$.
  This leads to a contradiction to the assumption that $\alpha$ is nonzero.
  Therefore, the set $\{x_{ij}\}_{i,j=1}^{r}$ is linearly independent, which is the necessary and sufficient conditions for a set of elements to form a basis for a vector space with dimension $pq$.
\end{proof}

\begin{figure}[t]
  \centering
  \begin{subfigure}[t]{0.42\textwidth}
    \centering
    \includegraphics[width=\textwidth]{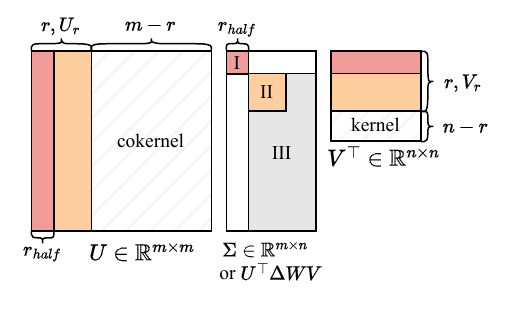}
    \caption{Full SVD and subspace partition.}
    \label{fig:svd}
  \end{subfigure}
  \hfill
  \begin{subfigure}[t]{0.57\textwidth}
    \centering
    \includegraphics[width=\linewidth]{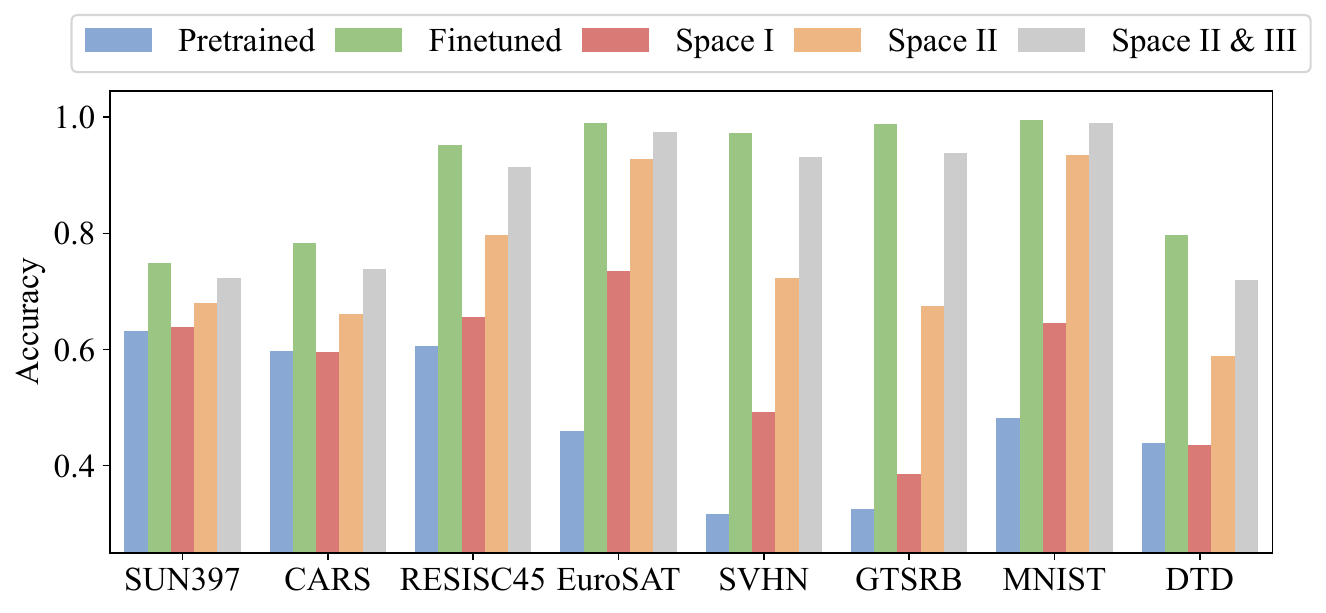}
    \caption{Accuracy comparison across different subspace projection strategies.}
    \label{fig:single-task}
  \end{subfigure}
  \caption{
    Here we show the SVD decomposition and subspace partition of the singular value matrix $\Sigma$, and the accuracy comparison of different subspace projection strategies discussed in Section~\ref{sec:revisiting_fine_tuning}.
  }
\end{figure}
We start by decomposing the weight matrix $W$ using the reduced SVD as $W = U_r \Sigma_r V_r^T$, where $U_r \in \mathbb{R}^{m \times r}$ and $V_r \in \mathbb{R}^{r \times n}$ are orthonormal matrices containing the left singular vectors and right singular vectors, respectively, $\Sigma_r \in \mathbb{R}^{r \times r}$ is a diagonal matrix containing the singular values sorted in descending order, and $r$ is the rank of the matrix $W$~\citep{olverAppliedLinearAlgebra2018}.
In the case of full SVD, the matrices are $U \in \mathbb{R}^{m \times m}$, $\Sigma \in \mathbb{R}^{m \times n}$, and $V \in \mathbb{R}^{n \times n}$, which preserve all information about the matrix $W$, including its kernel (null space) and cokernel (left null space), as shown in Figure~\ref{fig:svd}.
\begin{remark}
  According to Theorem~\ref{thm:matrix_subspace}, the set of matrices $\{u_i v_j^T | i\in[m], j\in[n]\}$ forms an orthonormal basis for a subspace of $\mathbb{R}^{m \times n}$ with dimension $mn$.
  In other words, for any real matrix $A\in\mathbb{R}^{m\times n}$, we can express it as a weighted sum of the elements in the basis, i.e. $A = \sum_{i=1}^{m} \sum_{j=1}^{n} \langle A, u_i v_j^T \rangle u_i v_j^T \in \operatorname{span}(\{u_i v_j^T\}_{i,j}^{m,n})$.
\end{remark}
\begin{remark}
  Let $\mathcal{U}$ and $\mathcal{V}$ be two subsets of $\{u_i\}_{i=1}^{m}$ and $\{v_i\}_{i=1}^{n}$, respectively. $\{u v^T| u\in \mathcal{U}, v\in \mathcal{V}\}$ forms a orthonormal basis for a subspace of $\mathbb{R}^{m\times n}$, with dimension $|\mathcal{U}||\mathcal{V}|\leq mn$.
\end{remark}
\label{remark:matrix_decomposition}
To gain insights into how fine-tuning modifies the pre-trained weights to adapt them to a specific task, we assume the fine-tuned linear layer accepts an input $x \in \mathbb{R}^{n}$ and outputs $y = W_{ft} x + b_{ft}$. Because the row space $\{v_i\}_{i=1}^{n}$ is an orthonormal basis for $\mathbb{R}^n$, we can decompose $x$ as $x = \sum_{i=1}^{n} \langle x, v_i \rangle v_i$, where $\langle \cdot, \cdot \rangle$ denotes the vector inner product. On the other hand, $W_{ft}$ and $b_{ft}$ are updated from the pre-trained weights $W$ and $b$. We can derive the following equation:
\begin{equation}
  \label{eq:ft_decomposition}
  y  = W_{ft} x + b_{ft}
  = (W + \Delta W) x + b + \Delta b
  = \underbrace{W x + b}_{\text{pre-trained part}} +
  \underbrace{\Delta W x + \Delta b}_{\text{fine-tuned part}}.
\end{equation}
Now we expand the pre-trained part and fine-tuned part in Eq.(\ref{eq:ft_decomposition}) separately as follows:
\begin{align}
  \text{pre-trained part} & = \sum_{i=1}^{n} W  \langle x, v_i \rangle v_i + b
  = \sum_{i=1}^{n} \left(\sum_{j=1}^{r} \sigma_j u_j v_j^\top\right) \langle x, v_i \rangle v_i + b                                        \\
                          & = \sum_{i=1}^{n} \sum_{j=1}^{r} \sigma_j \langle x, v_i \rangle u_j v_j^\top v_i + b
  = \sum_{j=1}^{r} \sigma_j \langle x, v_j \rangle u_j + b, %
  \\
  \text{fine-tuned part}  & = \sum_{i=1}^{n}  (W_{ft} - W) \langle x, v_i \rangle v_i + \Delta b
  = \sum_{i=1}^{n} \left(\sum_{j=1}^{m} \sum_{k=1}^{n} \delta_{jk} u_j v_k^\top\right) \langle x, v_i \rangle v_i  + \Delta b              \\
  \label{eq:ft_decomposition_ft_part}
                          & = \sum_{i=1}^{n} \sum_{j=1}^{m} \sum_{k=1}^{n} \delta_{jk} \langle x, v_i \rangle u_j v_k^\top v_i  + \Delta b
  = \sum_{j=1}^{m} \sum_{k=1}^{n} \delta_{jk} \langle x, v_k \rangle u_j +  \Delta b. %
\end{align}
Where $\delta_{jk} = \langle \Delta W, u_j v_k^\top \rangle = (U^\top \Delta W V)_{jk}$ is the Frobenius inner product between the fine-tuned weight update $\Delta W$ and the rank-one matrix $u_j v_k^\top$.
It also quantifies how much the weight updates align with the direction specified by $u_j v_k^\top$ and indicates which input-output transformation is enhanced or suppressed (or enhanced reversely) during fine-tuning, based on its sign and magnitude.
For example, a large positive $\delta_{jk}$ suggests that the connection between the $k$-th input direction ($v_k$) and $j$-th output direction ($u_k$) is strengthened for the downstream task.
This decomposition shows how different the pre-trained and fine-tuned parts contribute to the output. So far, we only understand that the fine-tuned update $\Delta W$ potentially uses all $mn$ dimensions, while the pre-trained part only uses $r$ dimensions.

We further split left/right singular vectors into three distinct subsets based on the distribution of the singular values of $\Delta W$, and design an ablation study corresponding to different zones in the projection coefficient matrix $U^\top \Delta W V$:
(1) The top-left zone I contains the most significant singular values that cumulatively account for 50\% of the total sum of the singular values, we denote the number of singular values in this zone as $r_{half}$, $\sum_{i=1}^{r_{half}} \sigma_i \approx  \sum_{i=1}^{r} \sigma_i/2$.
This zone is crucial for preserving pre-training information.
(2) The middle zone II encompasses the singular values that make up the remaining half of the cumulative sum. These values are still important but less so than those in zone I.
(3) Zone III contains no information about the pre-trained weights, as its range is beyond $\operatorname{rank}(W)$.
This zone partition is illustrated as the $\Sigma$ in Figure~\ref{fig:svd}.
We fine-tune the pre-trained CLIP-ViT-B/32 model on eight downstream tasks from different domains, including hand-written digit images, satellite images, regular patterns, car images, and natural images. Then we project the $\Delta W$ onto the different subspaces, including
(1) subspace I $\operatorname{span}(\{u_i v_j^T| i\in[r_{half}], j\in[r_{half}]\})$,
(2) subspace II $\operatorname{span}(\{u_i v_j^T| i\in[r_{half}, r], j\in[r_{half}, r]\})$,
and (3) subspace II \& III $\operatorname{span}(\{u_i v_j^T| i\in[r_{half}, m], j\in[r_{half}, n]\})$.
We compare the performance of the pre-trained model, the fine-tuned models, and modified fine-tuned models with different subspace projection strategies applied on $\Delta W$ in Figure~\ref{fig:single-task}. For more details, please refer to Appendix~\ref{sec:projection-merge}.

Figure~\ref{fig:single-task} demonstrates that projecting fine-tuned updates onto subspace I result in a slight improvement in performance on downstream tasks compared to the pre-trained model, sometimes showing no improvement at all. Projection onto subspace II leads to moderate improvement, while projection onto subspace II \& III results in significant performance gains, nearly reaching the level of the fine-tuned model.
Based on these observations, we draw the following conclusions:

\begin{tcolorbox}[width=\linewidth, colback=white!95!black]
  Fine-tuning largely maintains the most important pre-trained features, but leverages less significant dimensions for task-specific learning and activates or repurposes previously unused dimensions in the weight space.
\end{tcolorbox}

\section{Parameter Interference Between Task-Specific Models}
\label{sec:parameter_interference}

From the previous section, we build an understanding of how fine-tuning modifies the pre-trained weights to adapt to a specific downstream task.
In this section, we investigate the parameter interference between models fine-tuned on different tasks, which has been widely explored in multi-task learning and multi-task model merging, primarily within the model parameter space.
We add superscripts to denote the task index, e.g. $W_{ft}^{(i)}$ and $b_{ft}^{(i)}$ for the $i$-th task.

Assume we have $T$ tasks, and each task has a fine-tuned model. In the simplest cases, we consider the linear layers, accepting a common input $x$ and outputting $T$ different outputs $y^{(1)}, y^{(2)}, ..., y^{(T)}$.
According to Eq.(\ref{eq:ft_decomposition}) and Eq.(\ref{eq:ft_decomposition_ft_part}), each $y^{(i)}$ can be decomposed into the pre-trained part and fine-tuned part as follows:
\begin{align}
  y^{(i)} = \underbrace{\sum_{j=1}^{r} \sigma_j \langle x, v_j \rangle u_j + b}_{\text{pre-trained part}} + \underbrace{\sum_{j=1}^{m} \sum_{k=1}^{n} \delta_{jk}^{(i)} \langle x, v_k \rangle u_j + \Delta b^{(i)}}_{\text{fine-tuned part}}.
\end{align}
Where the pre-trained term is shared and remains constant during fine-tuning across all tasks.
In the context of model merging, these models are merged to construct a unified multi-task model that can perform all tasks simultaneously. A common approach is to use a weighted average of the fine-tuned weights, i.e. $W_{merged} = \sum_{l=1}^{T} \lambda_l W_{ft}^{(l)}$ and $b_{merged} = \sum_{l=1}^{T} \lambda_l b_{ft}^{(l)}$. This is equivalent to merging the fine-tuned parts of the models, while the pre-trained parts are shared across all tasks. Therefore, we express the output of the merged model as:
\begin{align}
  \label{eq:ft_decomposition_model_merging}
  y_{merged} = \text{pre-trained part} +
  \sum_{j=1}^{m} \sum_{k=1}^{n} \left(\sum_{l=1}^{T} \lambda_l \delta_{jk}^{(l)}\right) \langle x, v_k \rangle u_j + \sum_{l=1}^{T} \lambda_l \Delta b^{(l)}.
\end{align}
Substitute the input $x$ with $x^{(i)}$ from the $i$-th task (domain), we aim to minimize the discrepancy between the output of the merged model and the output of the $i$-th fine-tuned model. We formulate the optimization problem as follows:
\begin{align}
  \label{eq:model_merging_error}
  \min_{\lambda_l} \left\|y_{merged} - y^{(i)}\right\|_2^2
   & = \min_{\lambda_l} \left\|\sum_{j=1}^{m} \sum_{k=1}^{n} \left[\left(\sum_{l=1}^{T} \lambda_l \delta_{jk}^{(l)}\right) - \delta_{jk}^{(i)}\right] \langle x^{(i)}, v_k \rangle u_j + \left(\sum_{l=1}^{T} \lambda_l \Delta b^{(l)}\right) - \Delta b^{(i)} \right\|_2^2.
\end{align}
Using triangle inequality, we can decompose the error into two parts and assert an upper bound:
\begin{align}
  \left\|y_{merged} - y^{(i)}\right\|_2^2
   & \leq \underbrace{\left\|\sum_{j=1}^{m} \sum_{k=1}^{n} \left[\left(\sum_{l=1}^{T} \lambda_l \delta_{jk}^{(l)}\right) - \delta_{jk}^{(i)}\right] \langle x^{(i)}, v_k \rangle u_j \right\|_2^2}_{\text{weight term}}
  + \underbrace{\left\|\left(\sum_{l=1}^{T} \lambda_l \Delta b^{(l)}\right) - \Delta b^{(i)} \right\|_2^2}_{\text{bias term}}.
\end{align}
For the bias term, we have a closed-form solution for $\lambda_l = (\Delta B^\top \Delta B)^{-1} \Delta B \Delta b^{(i)}$, where $\Delta B$ is a matrix with $l$-th columns as $\Delta b^{(l)}$. Notice that this solution varies with the task (domain) index of the coming input $x^{(i)}$, so a more straightforward solution is to fix the bias term during the fine-tuning process, so that $\Delta b^{(i)} = 0$ for all $i$.
As for the weight term, parameter interference does not occur on subspace that is orthogonal to the input, i.e. $\operatorname{span}(\{u_i v_j^T| i\in[m], j\in[n] \text{ and } \langle x^{(i)}, v_j\rangle = 0\})$, thus a possible strategy is to enlarge the input size to increase the dimension of the orthogonal subspace. This explains why model merging methods perform better on larger models with more dimension redundancy.
When the input activates certain dimensions, i.e. for $k$ such that $\langle x^{(i)}, v_k \rangle \neq 0$, the interference is inevitable unless the domain gap between different tasks is large enough to make the activation dimensions disjoint.
Note that we can gain the same conclusion within the original model parameter space, by simply replacing the basis vectors $\{u_i\}_i$ and $\{v_i\}_i$ in this section with the standard Euclidean basis vectors $\{e_i\}_i$.

\section{Resolving Parameter Interference using Sparse Mixture of Low-Rank Experts}
\label{sec:sparse_mixture_of_experts}

\begin{figure}[t]
  \centering
  \includegraphics[width=0.85\linewidth]{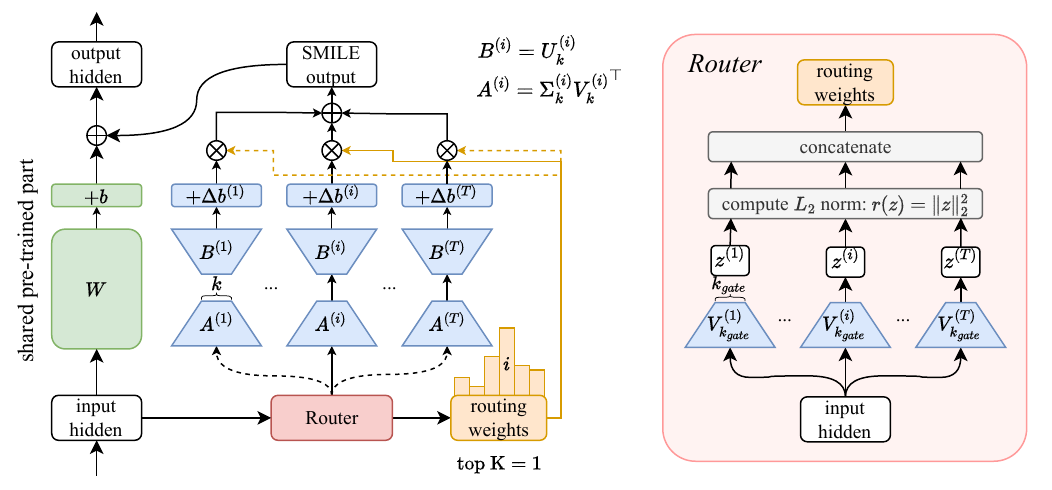}
  \caption{
    The architecture of the proposed Sparse MIxture of Low-rank Experts (SMILE) model.
  }
  \label{fig:SMILE}
\end{figure}

Understanding that addressing parameter interference by model merging is difficult, even just for the bias terms, the optimal method for weight combination has a closed-form solution that varies by task.
To manage this challenge, we introduce an innovative approach with a Sparse MIxture of Low-rank Experts (SMILE) model in this section, which operates in a zero-shot fashion, meaning no data or training is required.
An overview is shown in Figure~\ref{fig:SMILE}.

We upscale the linear layers from source models to the SMILE model, which consists of a shared pre-trained part, a router, and several low-rank experts.
Figure~\ref{fig:SMILE} is organized into two primary sections: the overall model architecture (left) and the routing mechanism (right).

Recall the output decomposition in Eq.(\ref{eq:ft_decomposition}) and Eq.(\ref{eq:ft_decomposition_model_merging}), we can express the output of a merged model as the output of the pre-trained model plus a weighted sum of the fine-tuned parts of the individual models.
If we can identify the most relevant experts for a given input, we can dynamically select the corresponding fine-tuned parts to combine with the pre-trained part. Then the merging error in Eq.(~\ref{eq:model_merging_error}) can be minimized. Mathematically, we can express this idea as:
\begin{align}
  \label{eq:naive_moe}
  y_{merged} = \text{pre-trained part} +
  \sum_{j=1}^{m} \sum_{k=1}^{n} \left(\sum_{l=1}^{T} \lambda_i\left(x^{(i)}\right) \delta_{jk}^{(l)}\right) \langle x^{(i)}, v_k \rangle u_j + \sum_{l=1}^{T} \lambda_i\left(x^{(i)}\right) \Delta b^{(l)}.
\end{align}
Here, $\lambda$ is a function that maps the input to a one-hot probability distribution over the tasks, i.e. $\lambda_j\left(x^{(i)}\right) = 1$ if $j=i$, and $\lambda_j\left(x^{(i)}\right) = 0$ otherwise.
However, a naive implementation of this idea would require a training process to learn the parameters of the router and a large number of additional parameters to store the fine-tuned weights of all tasks.
A more efficient approach is to remove less significant terms from the fine-tuned components in Eq.(\ref{eq:naive_moe}), focusing on retaining the most pertinent knowledge for each task.
Therefore, the parameter space must be ranked by the importance of its dimensions.
However, from previous findings in Section~\ref{sec:revisiting_fine_tuning}, we know that the fine-tuned information is distributed across less significant dimensions (Space II \& III), which is a large portion of the whole space.
We opt to use SVD to decompose the parameter differences $\Delta W^{(i)}$ for each task, and then apply a low-rank approximation to extract the most important part as follows:
\begin{align}
  \Delta W^{(i)} & = U^{i} \Sigma^{(i)} {V^{(i)}}^\top
  = \sum_{j=1}^{r^{(i)}} \sigma_j^{(i)} u_j^{(i)} {v_j^{(i)}}^\top
  \approx \sum_{j=1}^{k} \sigma_j^{(i)} u_j^{(i)} {v_j^{(i)}}^\top
  = U_k^{(i)} \Sigma_k^{(i)} {V_k^{(i)}}^\top, 1\leq k \leq r^{(i)}.
\end{align}
Where $r^{(i)}$ is the rank of the fine-tuned weight matrix $\Delta W^{(i)}$, and $k$ is the rank of the low-rank approximation, which is determined as a hyperparameter.
$U_k^{(i)}$ and $V_k^{(i)}$ contains the first $k$ columns of $U^{(i)}$ and $V^{(i)}$, respectively.
Here we drop the terms with indices $j > k$ in the summation, which correspond to the less significant dimensions.
Let $A^{(i)} = \Sigma_k^{(i)} {V_k^{(i)}}^\top$ and $B^{(i)} = U_k^{(i)}$, we can express the approximation similar to a LoRA adapter: $\Delta W x = B^{(i)} A^{(i)} x$.
The following theorem states the optimality of this low-rank approximation.
\begin{theorem}
  \label{thm:low_rank_approximation}
  Given a matrix $W \in \mathbb{R}^{m \times n}$, its low-rank approximation $W_k = U_k \Sigma_k V_k^\top$ with rank $k$ minimizes the Frobenius norm of the difference between $W$ and $W_k$, i.e. $W_k = \argmin_{\operatorname{rank}(W') = k} \|W - W'\|_F$.
\end{theorem}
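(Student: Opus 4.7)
The plan is to prove the classical Eckart--Young--Mirsky theorem. I would exploit two facts: the unitary invariance of the Frobenius norm and Weyl's inequality for singular values. First, I would reduce the problem to a diagonal one by writing $W = U \Sigma V^\top$ as the full SVD; since $U$ and $V$ are orthogonal, $\|W - W'\|_F = \|\Sigma - U^\top W' V\|_F$, and $U^\top W' V$ has the same rank as $W'$. The task then becomes: among matrices $M$ of rank at most $k$, minimize $\|\Sigma - M\|_F$.

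Next, I would establish a lower bound using Weyl's inequality, which states $\sigma_{i+j-1}(A+B) \leq \sigma_i(A) + \sigma_j(B)$ for any matrices $A, B$. Applying this with $A = W'$ (so $\sigma_{k+1}(W') = 0$ since $\operatorname{rank}(W') \leq k$), $B = W - W'$, and $i = k+1$, I obtain
\begin{equation}
\sigma_j(W - W') \geq \sigma_{k+j}(W) \quad \text{for all } j \geq 1.
\end{equation}
Since $\|\cdot\|_F^2$ equals the sum of squared singular values, this yields the bound
\begin{equation}
\|W - W'\|_F^2 = \sum_{j \geq 1} \sigma_j(W - W')^2 \geq \sum_{j > k} \sigma_j(W)^2,
\end{equation}
valid for every rank-$k$ candidate $W'$.

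Finally, I would verify that this lower bound is attained by $W_k = U_k \Sigma_k V_k^\top$. Writing $W - W_k = \sum_{i > k} \sigma_i u_i v_i^\top$ and invoking Theorem~\ref{thm:matrix_subspace} to get the orthonormality of the rank-one atoms $\{u_i v_j^\top\}$, the squared Frobenius norm telescopes to $\sum_{i > k} \sigma_i^2$, matching the lower bound. Hence $W_k$ is optimal.

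The main obstacle is Weyl's inequality itself, which is the nontrivial analytic ingredient, typically proved via the Courant--Fischer min-max characterization of singular values. If one prefers to avoid invoking it, an alternative route works as follows: argue that any optimal rank-$k$ approximation must take the form $W' = W P$ for some rank-$k$ orthogonal projection $P$ (by minimizing over the column factor when $W' = XY^\top$), then reduce the remaining problem to $\max_{P} \operatorname{tr}(W^\top W P)$, which is solved by projecting onto the span of the top $k$ right singular vectors of $W$, again delivering $W_k$. Either route completes the proof.
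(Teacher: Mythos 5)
Your proof is correct, but note that the paper itself gives no proof of this statement: Theorem~\ref{thm:low_rank_approximation} is the classical Eckart--Young--Mirsky theorem and is stated in the paper as a known fact, so there is no in-paper argument to compare against. Your route --- unitary invariance of $\|\cdot\|_F$ to reduce to the diagonal case, Weyl's inequality $\sigma_{k+j}(W) \leq \sigma_{k+1}(W') + \sigma_j(W - W')$ with $\sigma_{k+1}(W') = 0$ to get the lower bound $\|W - W'\|_F^2 \geq \sum_{j>k}\sigma_j(W)^2$, and then attainment by $W_k$ via the orthonormality of the rank-one atoms from Theorem~\ref{thm:matrix_subspace} --- is the standard textbook argument and all three steps check out. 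Two small cosmetic points: the initial reduction to the diagonal case is not actually needed once you invoke Weyl's inequality directly on $W = W' + (W - W')$, and the theorem as stated constrains $\operatorname{rank}(W') = k$ exactly while your bound is proved for $\operatorname{rank}(W') \leq k$, which is the stronger (and more standard) statement and subsumes the one in the paper.
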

Another key component of the SMILE model is the router, which determines the routing weights.
The routing weights should reflect the importance of each expert for a given input, and we hypothesize that the most important dimensions of the parameter updates have a larger probability of aligning with the input vector.
We provide a rationale for this hypothesis by examining the gradient flow throughout fine-tuning in Appendix~\ref{sec:gradient_flow}.
Therefore, we design the routing logits as the $L_2$ norm of the projections of the input onto the low-rank matrices. Mathematically, we can express the routing weights as:
\begin{equation}
  \label{eq:routing}
  r^{(i)} = \left\|\sum_{j=1}^{k_{gate}} \left\langle x, v_{j}^{(i)} \right\rangle \right\|_2 = \left\| V_{k_{gate}}^{(i)^\top} x \right\|_2.
\end{equation}
Where $k_{gate}$ is the number of dimensions used for routing, which is a hyperparameter.
$k_{gate}$ should not be excessively large, which could diminish the distinctiveness of routing weights.
In the extreme case where $k_{gate} = n$, $r^{(i)} = \|x\|_2$, which is equivalent to a uniform distribution over all experts.
In our hyperparameter analysis, we find that $k_{gate} = 4$ or $8$ is a good choice for most tasks.
To summarize, the output of the SMILE module can be expressed as:
\begin{align}
  y         & = \left(Wx+b\right) + \sum_{i=1}^{T} \frac{\lambda_i}{\sum_{j=1}^{T}\lambda_i} \left( U_k^{(i)} \Sigma_k^{(i)} {V_k^{(i)}}^\top x + \Delta b^{(i)}\right) \\
  \lambda_i & = \left\{ \begin{array}{ll}
                          p_i, & p_i \in \operatorname{TopK}(\{p_j\}_{j=1}^{T}, K) \\
                          0,   & \text{otherwise}
                        \end{array}
  \right.                                                                                                                                                               \\
  p_i       & = \operatorname{softmax}_i(r^{(i)})   = \operatorname{softmax}_i \left( \left\| V_{k_{gate}}^{(i)^\top} x \right\|_2 \right).
\end{align}

\textbf{Complexity Analysis}:
The linear layer has $m(n+1)$ parameters.
The upscaled SMILE module has $m(n+1) + T(mk + nk + m) + n T k_{gate}$ parameters, the additional parameters have a space complexity of $O(T(m k + n(k+ k_{gate})))$.
For every input token, an additional number of parameters of $n T k_{gate} + K (m k + n k + m)$ are activated, with $K$ representing the top-K hyperparameter.
For instance, with $T=8, k_{gate}=4, k=32, K=1$ and $m=n=1024$, the SMILE model has $565K$ additional parameters, which is about 53.9\% of the original parameter count.
$99K$ additional parameters are activated for each input token, which is only about 9.4\% of the original parameter count.

\textbf{Extending to Parameter-efficient fine-tuned (PEFT) models}:
It is straightforward to extend the SMILE upscaling to PEFT models, such as LoRA fine-tuned models.
We can still decompose the fine-tuning updates using SVD as $\Delta W_{LoRA} = B_{LoRA} A_{LoRA}$.
Note that for parameter-efficient fine-tuned models, such as LoRA fine-tuned models, $k_{gate}$ should be set to a smaller value than the hyperparameter rank of LoRA $r_{LoRA}$, and $k \leq r_{LoRA}$.

\section{Experiments}
\label{sec:experiments}

\begin{wraptable}{r}{0.52\textwidth}
  \vspace{-15pt}
  \centering
  \setlength{\tabcolsep}{2pt}
  \caption{Requirements of different model fusion methods.}
  \label{table:requirements}
  \begin{tabular}{lcccc}
    \toprule
    \textbf{Methods}      & \textbf{Validation Set} & \textbf{Test-Time Adaptation} \\
    \midrule
    Weight Averaging      & \tikzxmark              & \tikzxmark                    \\
    Fisher-Merging        & \checkmark              & \tikzxmark                    \\
    RegMean               & \checkmark              & \tikzxmark                    \\
    Task Arithmetic       & \checkmark              & \tikzxmark                    \\
    Ties-Merging          & \checkmark              & \tikzxmark                    \\
    AdaMerging            & \tikzxmark              & \checkmark                    \\
    WEMoE                 & \tikzxmark              & \checkmark                    \\
    \textbf{SMILE (Ours)} & \tikzxmark              & \tikzxmark                    \\
    \bottomrule
  \end{tabular}
  \vspace{-10pt}
\end{wraptable}

In this section, we evaluate the effectiveness of the proposed SMILE on a variety of setups, including image classification and text generation tasks, as well as full fine-tuning and LoRA fine-tuning.
Detailed information about the fine-tuned models is in Appendix~\ref{sec:individuals}.
We compare our method with several SOTA model fusion techniques, including Simple Averaging~\citep{wortsmanModelSoupsAveraging2022}, Fisher merging~\citep{michaelmatenaMergingModelsFisherWeighted2022}, RegMean~\citep{xisenjinDatalessKnowledgeFusion2023}, Task Arithmetic~\citep{ilharcoEditingModelsTask2023}, Ties-Merging~\citep{yadavResolvingInterferenceWhen2023}, AdaMerging~\citep{yangAdaMergingAdaptiveModel2023}, and WEMoE~\citep{tangMergingMultiTaskModels2024}.
To further demonstrate the scalability of SMILE upscaling, we also conduct experiments using off-the-shelf large language models fine-tuned from \texttt{Mistral-7B-v0.1}.
Our code implementation is based on FusionBench~\citep{tangFusionBenchComprehensiveBenchmark2024}.

\subsection{Multi-Task Model Fusion on Open-Vocabulary Image Classification Tasks}
\label{sec:open-vocabulary-image-classification}

\begin{wrapfigure}{r}{0.4\textwidth}
  \centering
  \includegraphics[width=0.4\textwidth]{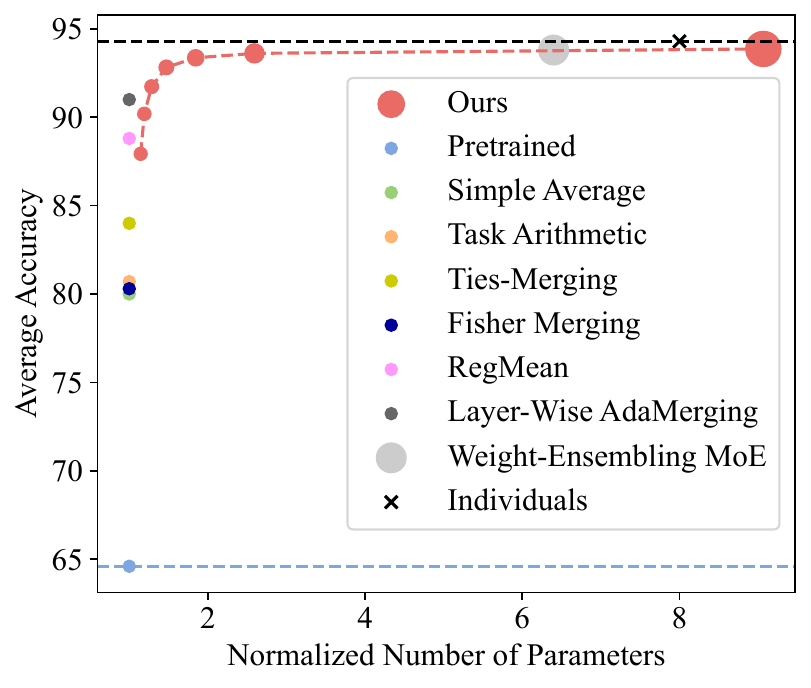}
  \caption{
    Multi-task model fusion experiment on eight image classification tasks using CLIP-ViT-L/14 models ($k_{gate}=16$).
  }
  \vspace{-20pt}
  \label{fig:clip-vit-l-14_scatter}
\end{wrapfigure}
We first evaluate our proposed SMILE method on eight diverse open-vocabulary image classification tasks using CLIP models from HuggingFace~\footnote{\url{https://huggingface.co/openai/clip-vit-base-patch32}}\footnote{\url{https://huggingface.co/openai/clip-vit-large-patch14}}.
For each task, the text encoder of the pre-trained model is frozen, and only the vision encoder is fine-tuned.
Table~\ref{table:requirements} presents the requirements of different model fusion methods, highlighting that SMILE is a training-free model fusion method that does not require additional labeled samples or test-time adaptation.

Figures~\ref{fig:clip-vit-b-32_scatter} and \ref{fig:clip-vit-l-14_scatter} illustrates the average accuracy of the merged model across different methods, for SMILE $k_{gate}$ is set to 16 and $k$ is varied from 4 to 128. These two figures demonstrate the effectiveness of SMILE and the trade-off between performance and model size.

In Tables~\ref{table:clip-vit-b-32} and \ref{table:clip-vit-l-14}, we compare the performance of various model fusion methods on CLIP-ViT-B/32 and CLIP-ViT-L/14 models, respectively.
Our SMILE method achieves competitive results across all tasks.
For instance, with CLIP-ViT-L/14 models, SMILE (2: $k_{gate}=16$, $k=128$) achieves 99.3\% of the individual model performance while using only 2.56 times the parameters of a single model, compared to eight individual fine-tuned models and Weight-Ensembling MoE which requires 6.40 times the parameters.

\begin{table}
  \caption{
    Multi-task model fusion methods on eight image classification tasks using CLIP-ViT-B/32 models.
    Here we show two different hyperparameter settings for our method:
    (1) $k_{gate}=16$, $k=32$ and the normalized parameter count is 1.61;
    (2) $k_{gate}=16$, $k=128$ and the normalized parameter count is 3.07.
  }
  \label{table:clip-vit-b-32}
  \centering
  \setlength{\tabcolsep}{2pt}
  \begin{tabular}{lccccccccc}
    \toprule
    \textbf{Method}                  & \textbf{SUN397} & \textbf{Cars} & \textbf{RESISC45} & \textbf{EuroSAT} & \textbf{SVHN} & \textbf{GTSRB} & \textbf{MNIST} & \textbf{DTD}  & \textbf{Avg.}          \\
    \midrule
    Individual                       & 75.0            & 78.3          & 95.2              & 99.0             & 97.3          & 98.9           & 99.6           & 79.7          & 90.3 (100\%)           \\
    \midrule
    Simple Averaging                 & 65.4            & 62.6          & 70.8              & 76.9             & 64.5          & 54.9           & 86.3           & 50.9          & 66.5 (73.6\%)          \\
    Fisher Merging                   & 66.7            & 64.0          & 72.2              & 91.6             & 69.0          & 64.3           & 83.5           & 53.7          & 70.6 (78.2\%)          \\
    RegMean                          & 67.8            & 68.9          & 82.5              & 94.4             & 90.6          & 79.2           & 97.6           & 63.2          & 80.5 (89.1\%)          \\
    Task Arithmetic                  & 57.1            & 55.7          & 64.9              & 76.7             & 77.9          & 68.5           & 96.1           & 47.2          & 68.0 (75.3\%)          \\
    Ties-Merging                     & 67.1            & 64.2          & 74.1              & 76.8             & 77.7          & 69.4           & 94.1           & 54.0          & 72.2 (80.0\%)          \\
    AdaMerging                       & 67.9            & 71.3          & 83.5              & 92.7             & 87.4          & 92.9           & 98.2           & 67.0          & 82.6 (91.5\%)          \\
    WEMoE ($\times$ 6.27)            & \textbf{73.7}   & 76.8          & \textbf{93.4}     & 98.2             & 96.8          & \textbf{98.2}  & \textbf{99.6}  & 76.6          & 89.2 (98.8\%)          \\
    \textbf{SMILE} (1, $\times$1.61) & 73.6            & 74.4          & 89.5              & 98.1             & 95.4          & 97.3           & 99.5           & 76.3          & 87.7 (97.1\%)          \\
    \textbf{SMILE} (2, $\times$3.07) & 73.6            & \textbf{77.8} & 92.0              & \textbf{98.3}    & \textbf{96.9} & 98.1           & \textbf{99.6}  & \textbf{78.1} & \textbf{89.3 (98.9\%)} \\
    \bottomrule
  \end{tabular}
\end{table}

\begin{table}
  \caption{
    Multi-task model fusion methods on eight image classification tasks using CLIP-ViT-L/14 models.
    Here we show two different hyperparameter settings for our method:
    (1) $k_{gate}=16$, $k=32$ and the normalized parameter count is 1.47;
    (2) $k_{gate}=16$, $k=128$ and the normalized parameter count is 2.56.
  }
  \label{table:clip-vit-l-14}
  \centering
  \setlength{\tabcolsep}{2pt}
  \begin{tabular}{lccccccccc}
    \toprule
    \textbf{Method}                  & \textbf{SUN397} & \textbf{Cars} & \textbf{RESISC45} & \textbf{EuroSAT} & \textbf{SVHN} & \textbf{GTSRB} & \textbf{MNIST} & \textbf{DTD}  & \textbf{Avg.}          \\
    \midrule
    Individual                       & 82.8            & 92.9          & 97.4              & 99.2             & 97.9          & 99.2           & 99.8           & 85.5          & 94.3 (100\%)           \\
    \midrule
    Simple Averaging                 & 72.5            & 81.5          & 82.2              & 90.0             & 81.6          & 74.0           & 96.6           & 61.8          & 80.0 (84.8\%)          \\
    Fisher Merging                   & 70.6            & 79.4          & 84.1              & 98.1             & 74.7          & 85.0           & 89.5           & 61.0          & 80.3 (85.2\%)          \\
    RegMean                          & 75.3            & 88.4          & 90.0              & 97.1             & 95.9          & 92.4           & 98.5           & 72.6          & 88.8 (94.2\%)          \\
    Task Arithmetic                  & 72.0            & 79.0          & 80.5              & 86.0             & 87.5          & 83.5           & 98.0           & 58.8          & 80.7 (85.6\%)          \\
    Ties-Merging                     & 74.7            & 83.3          & 86.4              & 91.3             & 89.7          & 85.2           & 97.8           & 63.9          & 84.0 (89.1\%)          \\
    AdaMerging                       & 78.1            & 90.7          & 90.8              & 96.5             & 94.8          & 97.5           & 98.6           & 81.3          & 91.0 (96.5\%)          \\
    WEMoE   ($\times$6.40)           & 81.5            & \textbf{92.3} & \textbf{96.5}     & 98.8             & 97.6          & \textbf{99.4}  & 99.6           & \textbf{84.5} & \textbf{93.8 (99.5\%)} \\
    \textbf{SMILE} (1, $\times$1.47) & 79.9            & 91.0          & 94.3              & 99.0             & 97.9          & 98.6           & \textbf{99.7}  & 82.2          & 92.8 (98.4\%)          \\
    \textbf{SMILE} (2, $\times$2.56) & \textbf{81.9}   & \textbf{92.3} & 95.5              & \textbf{99.1}    & \textbf{98.0} & 98.9           & \textbf{99.7}  & 83.6          & 93.6 (99.3\%)          \\
    \bottomrule
  \end{tabular}
\end{table}

\subsection{Multi-Task Model Fusion on Text Generation Tasks}
\label{sec:multi-task-model-fusion}

We further evaluate SMILE on text generation tasks using Flan-T5-base models~\footnote{\url{https://huggingface.co/google/flan-t5-base}}, which are fine-tuned on eight tasks from the GLUE benchmark~\citep{wangGLUEMultiTaskBenchmark2018}.
We use two different fine-tuning strategies: full fine-tuning and LoRA fine-tuning with $r_{LoRA}=16$.
We present the results in Tables~\ref{table:flan-t5-base_fusion_full_finetuned} and \ref{table:flan-t5-base_fusion_lora16}, for full fine-tuned models and LoRA fine-tuned models, respectively.
For fully fine-tuned models, SMILE consistently outperforms other fusion methods across all eight tasks. With just 1.52 times the parameters of a single model, SMILE (2: $k_{gate}=8,k=32$) achieves 99.0\% of the individual model performance with 1.52 times the parameters of a single model. In the LoRA fine-tuned scenario, SMILE maintains strong performance with minimal parameter increase (1.02 times). It achieves 99.3\% of the individual model performance, significantly surpassing other multi-task model fusion methods.

\begin{table}
  \caption{
    Multi-task performance when merging Flan-T5-base (full fine-tuned) models on all eight tasks.
    Here we show two different hyperparameter settings for our method:
    (1) $k_{gate}=4,k=16$ and the normalized parameter count is 1.26;
    (2) $k_{gate}=8,k=32$ and the normalized parameter count is 1.52.
  }
  \label{table:flan-t5-base_fusion_full_finetuned}
  \centering
  \begin{tabular}{lccccccccc}
    \toprule
    \textbf{Method}                  & \textbf{CoLA} & \textbf{MNLI} & \textbf{MRPC} & \textbf{QNLI} & \textbf{QQP}  & \textbf{RTE}  & \textbf{SST2} & \textbf{STSB} & \textbf{Avg.}          \\
    \midrule
    Individual                       & 75.0          & 83.4          & 87.5          & 91.5          & 85.4          & 85.9          & 93.6          & 88.7          & 86.4 (100\%)           \\
    \midrule
    Weight Averaging                 & 69.1          & 62.6          & 79.4          & 89.8          & 83.9          & 81.2          & 91.7          & 73.2          & 78.9 (91.3\%)          \\
    Task Arithmetic                  & 70.5          & 57.8          & 78.4          & 90.2          & 83.6          & 80.5          & 92.3          & 77.8          & 78.9 (91.3\%)          \\
    Ties-Merging                     & 70.3          & 65.0          & 78.9          & 90.2          & 83.5          & 81.6          & 91.7          & 78.3          & 79.9 (92.5\%)          \\
    \textbf{SMILE} (1, $\times$1.26) & 72.0          & \textbf{84.2} & 84.3          & 91.3          & 84.7          & 84.1          & 93.3          & 87.0          & 85.1 (98.5\%)          \\
    \textbf{SMILE} (2, $\times$1.52) & \textbf{73.2} & \textbf{84.2} & \textbf{85.0} & \textbf{91.3} & \textbf{84.9} & \textbf{84.8} & \textbf{93.5} & \textbf{87.3} & \textbf{85.5 (99.0\%)} \\
    \bottomrule
  \end{tabular}
\end{table}

\begin{table}
  \caption{
    Multi-task performance when merging Flan-T5-base (LoRA fine-tuned) models on all eight tasks.
    We choose $k_{gate}=2, k=4$ and the normalized parameter count is 1.02.
  }
  \label{table:flan-t5-base_fusion_lora16}
  \centering
  \begin{tabular}{lccccccccc}
    \toprule
    \textbf{Method}               & \textbf{CoLA} & \textbf{MNLI} & \textbf{MRPC} & \textbf{QNLI} & \textbf{QQP}  & \textbf{RTE}  & \textbf{SST2} & \textbf{STSB} & \textbf{Avg.}          \\
    \midrule
    Individual                    & 69.1          & 82.7          & 85.5          & 90.9          & 84.0          & 84.4          & 92.9          & 87.4          & 84.6 (100\%)           \\
    \midrule
    Weight Averaging              & \textbf{69.7} & 59.7          & 78.9          & 90.1          & 83.8          & 80.5          & 91.2          & 72.0          & 78.2 (92.4\%)          \\
    Task Arithmetic               & 68.8          & 55.2          & 78.7          & 89.8          & 83.7          & 79.1          & 91.5          & 72.4          & 77.4 (91.5\%)          \\
    Ties-Merging                  & 68.3          & 56.3          & 79.4          & 89.8          & 83.7          & 79.4          & 91.6          & 71.2          & 77.5 (91.6\%)          \\
    \textbf{SMILE} ($\times$1.02) & 69.3          & \textbf{82.9} & \textbf{83.8} & \textbf{90.6} & \textbf{83.9} & \textbf{83.4} & \textbf{93.1} & \textbf{85.1} & \textbf{84.0 (99.3\%)} \\
    \bottomrule
  \end{tabular}
\end{table}

\subsection{Spare Mixture of Low-Rank Experts Analysis}
\label{sec:method_analysis}

To better understand SMILE, we further conduct ablation studies using CLIP and Flan-T5 models.

\begin{figure}[t]
  \begin{center}
    \begin{subfigure}{115pt}
      \centering
      \includegraphics[height=1.2in]{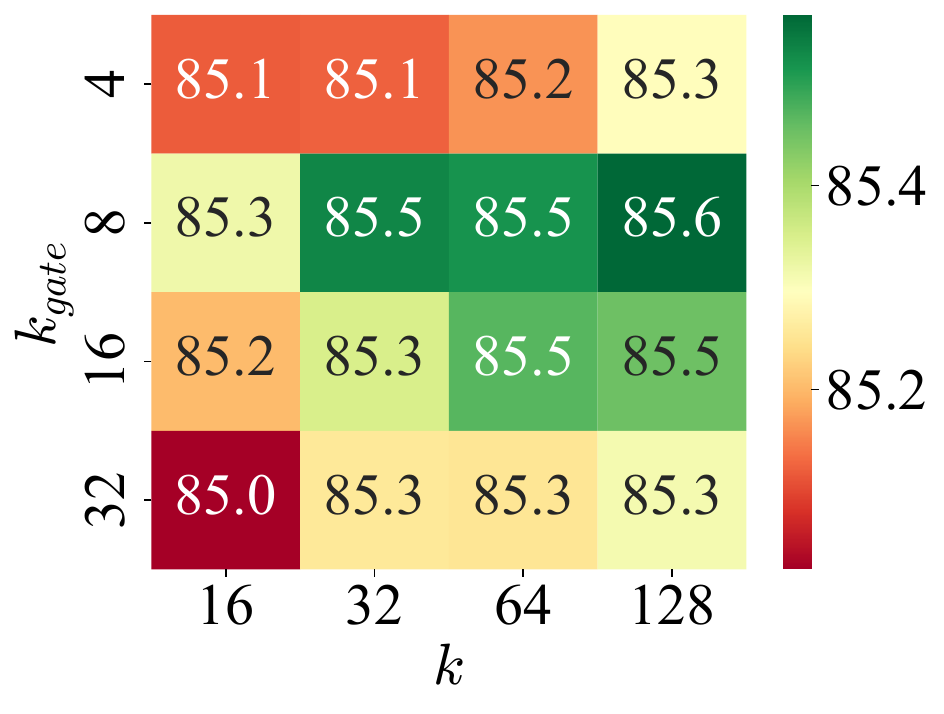}
      \caption{average performance}
      \label{fig:flan-t5-base_hp-acc}
    \end{subfigure}%
    \begin{subfigure}{115pt}
      \centering
      \includegraphics[height=1.2in]{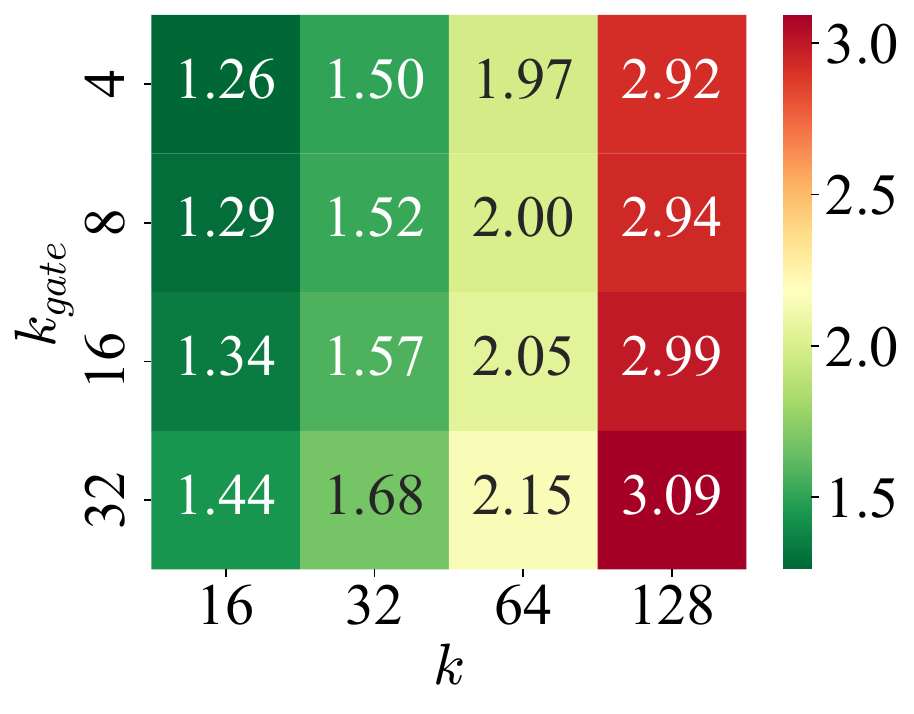}
      \caption{normalized parameters count}
      \label{fig:flan-t5-base_hp-params}
    \end{subfigure}
    \begin{subfigure}{115pt}
      \centering
      \includegraphics[height=1.2in]{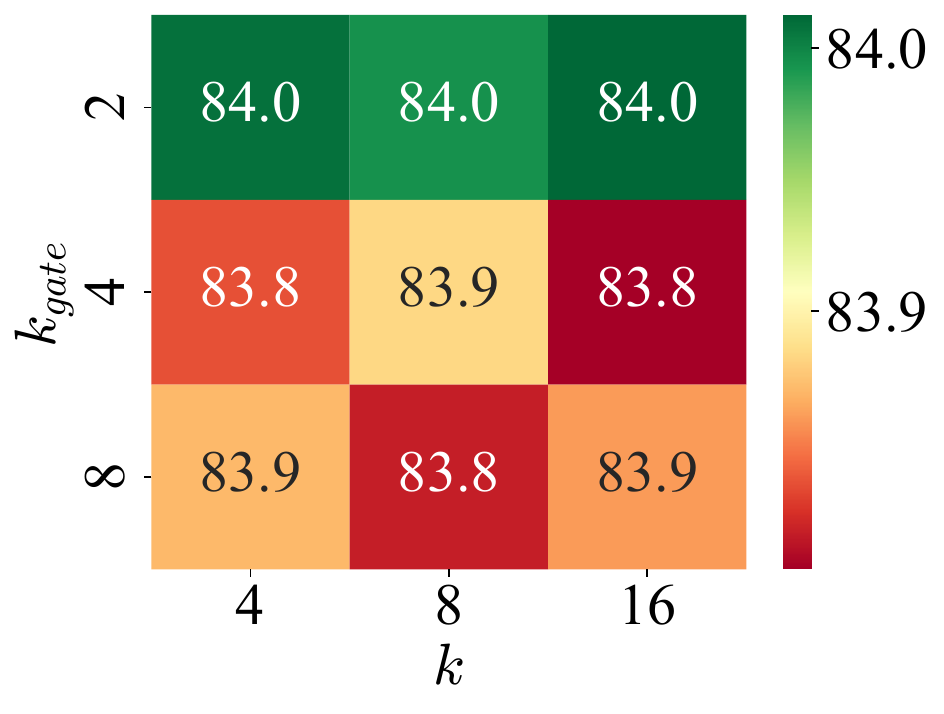}
      \caption{average performance}
    \end{subfigure}%
    \begin{subfigure}{115pt}
      \centering
      \includegraphics[height=1.2in]{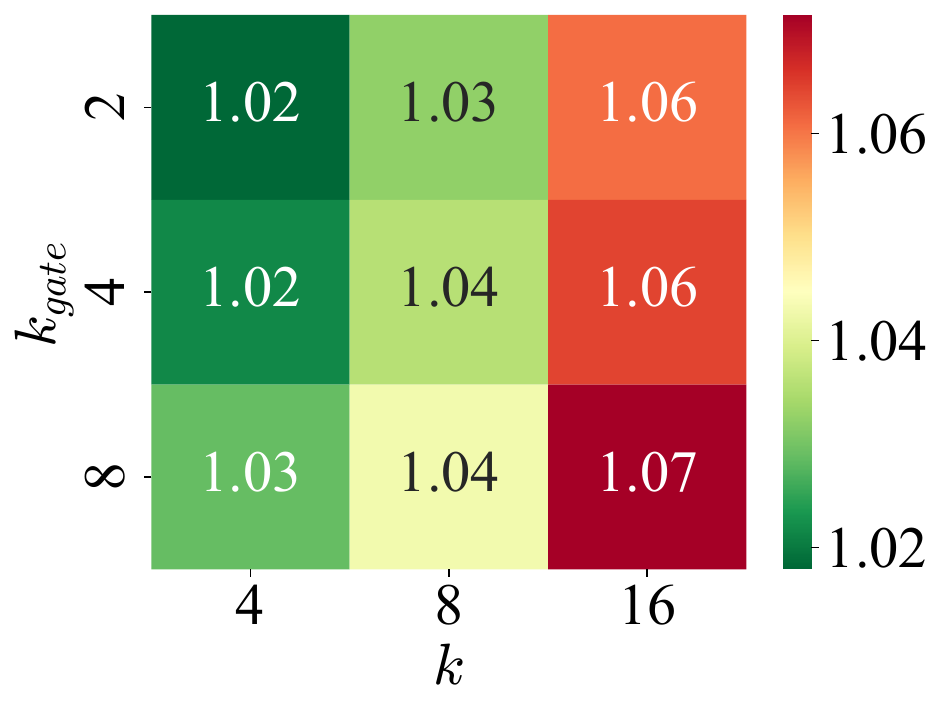}
      \caption{normalized parameters count}
      \label{fig:flan-t5-base_lora16_hp-params}
    \end{subfigure}
  \end{center}
  \caption{
    Hyperparameter analysis of the Flan-T5-Base models on eight tasks from GLUE benchmark.
    We show how different values of hyperparameters $k$ and $k_{gate}$ affect the average performance and the normalized number of parameters in the upscaled model.
    Subfigures (a), and (b) show the results of the full fine-tuned models, while subfigures (c), and (d) show the results of the fine-tuned models with $r_{LoRA}=16$.
  }
  \label{fig:flan-t5-base_hp}
\end{figure}
\begin{figure}
  \centering
  \includegraphics[width=0.8\textwidth]{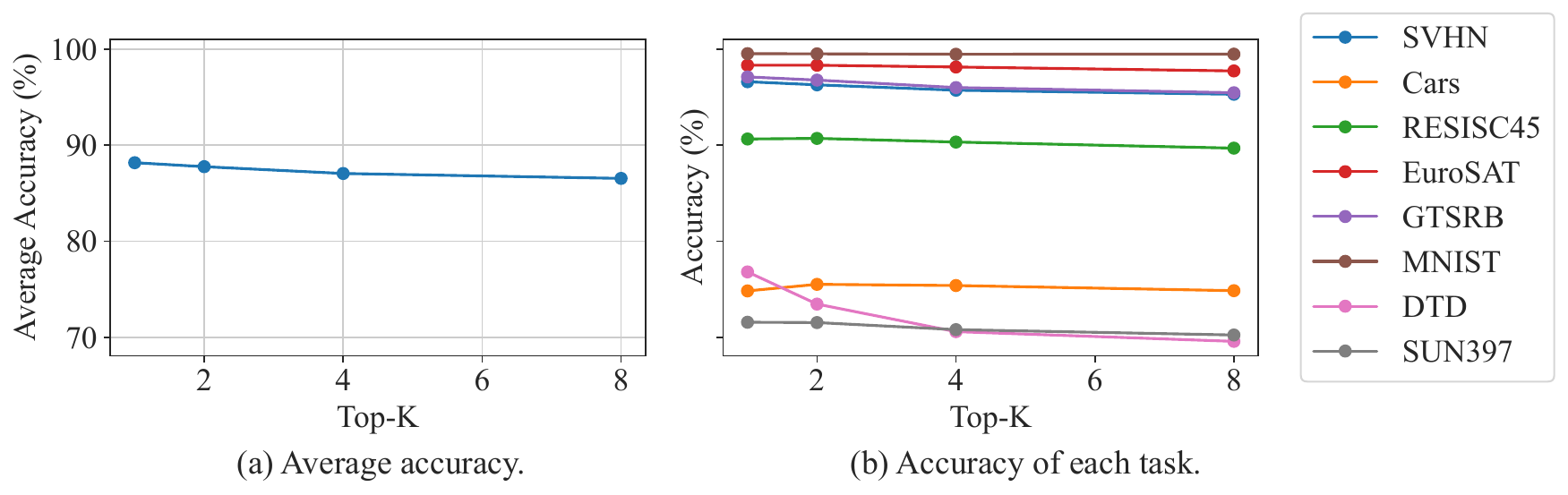}
  \caption{
    Ablations on the Top-$K$ routing for CLIP-ViT-B/32 models on eight image classification tasks ($k_{gate}=16, k=32$).
    Here we show the average accuracy and the accuracy on each task, and the y-axis is shared.
  }
  \label{fig:clip-vit-b-32_ablations-topk}
\end{figure}

\textbf{Ablations on the low-rank approximation rank $k$ and routing dimension $k_{gate}$ (hyperparameter analysis).}
Our hyperparameter analysis demonstrates the flexibility and robustness of SMILE across different model architectures and tasks.
Figure \ref{fig:flan-t5-base_hp} illustrates the impact of hyperparameters $k$ and $k_{gate}$ on performance and parameter count for Flan-T5-Base models in both full and LoRA fine-tuned scenarios. For CLIP-ViT models, Figures \ref{fig:clip-vit-b-32_hp} and \ref{fig:clip-vit-l-14_hp} provide detailed heatmaps and line plots showing the relationship between hyperparameters, average accuracy, and parameter count.
Across all models, we observe a consistent trend: increasing $k$ and $k_{gate}$ generally leads to improved performance, but with diminishing returns as parameter count grows. Notably, SMILE achieves near-optimal performance with relatively small values of $k$ and $k_{gate}$. This analysis highlights the effectiveness of SMILE in balancing performance and efficiency, allowing users to fine-tune the trade-off based on their specific requirements. The stability of performance across a range of hyperparameter values also underscores the robustness of our method, making it adaptable to various multi-task fusion scenarios. For more details, please refer to Appendix~\ref{sec:hyperparameter-analysis}.

\textbf{Ablations on Top-$K$ routing (routing analysis).}
Here we compare different values $K$ in the top-$K$ routing mechanism.
The plots in Figure~\ref{fig:clip-vit-b-32_ablations-topk} illustrate the impact of varying $K$ on both the average accuracy across all tasks (Figure~\ref{fig:clip-vit-b-32_ablations-topk}a) and the accuracy of each individual task (Figure~\ref{fig:clip-vit-b-32_ablations-topk}b) when using the CLIP-ViT-B/32 model across eight image classification tasks.
We observe that the average accuracy decreases slightly as $K$ increases.
This suggests that larger values of $K$, which allow more experts to be used for each input, are not necessary for multi-task model fusion where each expert is specialized for a specific task.
In general, the performance of individual tasks is relatively stable across different values of $K$, indicating the robustness of the routing mechanism of SMILE (Equation~(\ref{eq:routing})).

\subsection{Scalability to Large-Scale Models (Mistral-7B models)}
\label{sec:scalability}

\begin{wrapfigure}{r}{0.4\textwidth}
  \centering
  \vspace{-15pt}
  \includegraphics[width=0.4\textwidth]{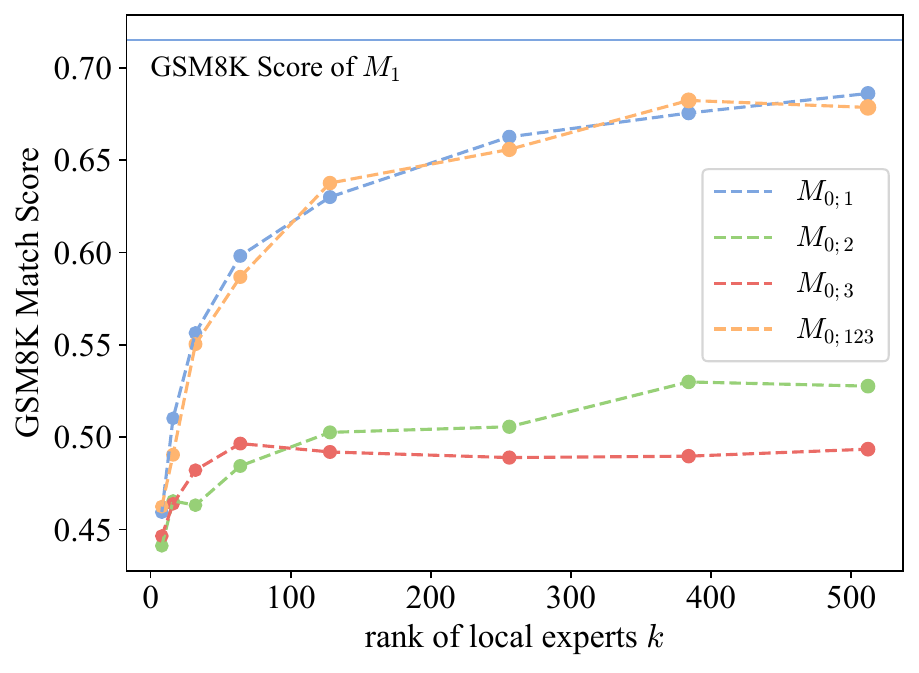}
  \caption{
    The GSM8K benchmark scores of upscaled Mistral-7B models with varying $k$.
  }
  \label{fig:mistral-7b_gsm8k}
  \vspace{-20pt}
\end{wrapfigure}
To demonstrate the scalability of SMILE to large-scale models, we conduct experiments on \texttt{Mistral-7B} models.
We use \texttt{Mistral-7B-v0.1} as our base pre-trained model, referred to as $M_{0}$, and acquire three specialized models from HuggingFace~\citep{wolfHuggingFaceTransformersStateoftheart2020}.
The expert models are respectively labeled as $M_1$, $M_2$, and $M_3$~\footnote{The expert models are
  \href{https://huggingface.co/meta-math/MetaMath-Mistral-7B}{\texttt{meta-math/MetaMath-Mistral-7B}},
  \href{https://huggingface.co/cognitivecomputations/dolphin-2.1-mistral-7b}{\texttt{cognitivecomputations/dolphin-2.1-mistral-7b}} and
  \href{https://huggingface.co/uukuguy/speechless-code-mistral-7b-v1.0}{\texttt{uukuguy/speechless-code-mistral-7b-v1.0}} respectively.}.
Of these models, $M_1$ stands out as the sole expert in mathematics.
To demonstrate the efficacy of the zero-shot routing mechanisms, we construct four distinct series of SMILE models with various expert combinations. These series are designated as $M_{0;1}$, $M_{0;2}$, $M_{0;3}$, and $M_{0;123}$, with $M_{0;i_1 \dots i_n}$ indicating the SMILE model that combines $M_0$ with the expert models $M_{i_1}, \dots, M_{i_n}$.

\begin{table}
  \centering
  \caption{
    Comparison of individual Mistral-7B models and the upscaled model on various benchmark tasks.
  }
  \label{table:mistral-7b}
  \begin{tabular}{ccccc}
    \toprule
    \textbf{Model}                                  & \textbf{MMLU}  & \textbf{TruthfulQA} & \textbf{GSM8K} & \textbf{ARC Challenge} \\
    \midrule
    $M_0$ (pre-trained)                             & 59.64          & 42.62               & 38.81          & 53.92                  \\
    $M_1$                                           & 60.56          & 44.79               & \textbf{71.49} & 51.02                  \\
    $M_2$                                           & 60.56          & \textbf{55.88}      & 56.93          & 57.00                  \\
    $M_3$                                           & \textbf{61.18} & 47.47               & 48.98          & \textbf{57.68}         \\
    \midrule
    $M_{0;123}$ $(11.2\text{B}, k_{gate}=8, k=512)$ & 60.66          & 52.79               & 67.85          & 54.35                  \\
    \midrule
    \texttt{Qwen1.5-14B} (reference)                & 66.11          & 52.00               & 69.37          & 49.93                  \\
    \bottomrule
  \end{tabular}
\end{table}

In Figure~\ref{fig:mistral-7b_gsm8k}, we present the GSM8K benchmark scores of the upscaled Mistral-7B models with varying rank of local experts $k$ with a constant rank of routers $k_{gate}=8$. This plot highlights the trade-offs in selecting expert rank $k$ for the upscaled SMILE model, where the GSM8K score generally improves as $k$ increases, but this improvement is more pronounced for specific expert combinations, particularly for $M_{0;1}$ and $M_{0;123}$. This suggests that the routers succeed in selecting the proper expert for math problems.
In Table~\ref{table:mistral-7b}, we compare the performance of individual models and the upscaled model on various benchmark tasks. Notably, the individual expert models show strengths in specific benchmarks, such as  $M_1$ excelling in the GSM8K benchmark and $M_3$ in the ARC Challenge. This indicates that each expert brings specialized knowledge, which when combined, enhances the overall performance in a diverse set of tasks.
More details are in Appendix~\ref{appendix:large-scale-experiments}.

\section{Related Work}

\textbf{Mixture of Experts.}
The concept of Mixture of Experts (MoE) is first introduced by~\cite{jacobsAdaptiveMixturesLocal1991}, involving training multiple specialized models.
This concept has gained significant attention in recent years~\citep{jiangMixtralExperts2024,daiDeepSeekMoEUltimateExpert}, with much of the innovation focusing on routing mechanisms and expert design.
Much innovation revolves around the design of more efficient routers.
For example, the Switch Transformer \citep{fedusSwitchTransformersScaling2022} selects only the top expert for each token, simplifying the process and improving scalability.
Similarly, \citep{lewisBASELayersSimplifying2021} use a linear assignment to optimize token-expert affinities, ensuring an equal spread of tokens among experts.
In~\citep{ostapenkoModularLLMsBuilding2024}, the authors propose to build a library of LoRA adapters using model-based clustering and build a MoE to select the most relevant adapters based on input without retraining.
For detailed reviews on MoE, see \citep{fedusReviewSparseExpert2022}, and for MoE in the context of model merging (MoEerging), refer to \citep{yadavSurveyModelMoErging2024}.

\textbf{Deep Model Fusion.}
Mode connectivity reveals that different model solutions can be linked by low-loss path in the parameter space~\citep{danielfreemanTopologyGeometryHalfrectified2017,nagarajanUniformConvergenceMay2019,draxlerEssentiallyNoBarriers2019,frankleLinearModeConnectivity2020,entezariRolePermutationInvariance2022,garipovLossSurfacesMode2018,tatroOptimizingModeConnectivity2020,yunisConvexityLinearMode2022,bentonLossSurfaceSimplexes2021}, facilitating model fusion by weight interpolation~\citep{izmailovAveragingWeightsLeads2019,michaelmatenaMergingModelsFisherWeighted2022,wortsmanModelSoupsAveraging2022,kaddourStopWastingMy2022,ilharcoEditingModelsTask2023,yadavResolvingInterferenceWhen2023,yangAdaMergingAdaptiveModel2023,wuPiTuningTransferring2023}.
However, this strategy also poses challenges, particularly when merging models with diverse structures.
Alignment helps reduce model disparities by matching and interpolating components~\citep{liConvergentLearningDifferent2016,tatroOptimizingModeConnectivity2020}.
Methods involve matching activations or weights \citep{georgestoicaZipItMergingModels2023,xisenjinDatalessKnowledgeFusion2023,yangRepresentationSurgeryMultiTask2024}, using channel-wise graph matching \citep{liuDeepNeuralNetwork2022a}, or applying permutation invariance \citep{ainsworthGitReBasinMerging2023}.
Another line of research is model mixing, which combines models through gating mechanisms or depth concatenation~\citep{tangMergingMultiTaskModels2024,luTwinMergingDynamicIntegration2024,tangEfficientParetoSet2024,kimSOLAR107B2023}, allowing for more flexible and adaptive fusion strategies.

\section{Conclusion, Limitations, and Future Work}
\label{sec:conclusion}

In this paper, we introduced the Sparse Mixture of Low-Rank Experts (SMILE) model as a novel approach to model fusion, Our method leverages a zero-shot mechanism, eliminating the need for additional training data or processes, which makes it highly practical.
While the MoE method is designed to be efficient through sparse activation, it still adds extra computational overhead, especially as the number of tasks or experts increases.

Understanding which subspaces contribute most to task-specific performance could lead to more targeted and efficient fine-tuning strategies, potentially focusing on updating specific parts of the model while leaving others intact.
Additionally, this approach might be applied to other areas, like multi-modal large language models, where different types of data (modalities) are treated as separate experts.
Furthermore, it would be worth exploring how SMILE can manage multi-objective optimization by adjusting the importance of different routing weights.
Moreover, develop methods to dynamically adjust the number of experts $K$ per token based on the input, potentially improving efficiency without sacrificing performance.

\clearpage
\bibliographystyle{plainnat}
\bibliography{references}

\clearpage
\appendix
\section{Projecting Fine-tuned Updates onto Subspaces}
\label{sec:projection-merge}

This section provides an in-depth mathematical explanation of the projection merge experiments discussed in Section~\ref{sec:revisiting_fine_tuning}.
These experiments aim to gain empirical insights into the distribution of task-specific information across different subspaces of the weight matrix after fine-tuning a pre-trained model on downstream tasks.

Let $W \in \mathbb{R}^{m \times n}$ be the weight matrix of a linear layer in the pre-trained model, and $W_{ft} \in \mathbb{R}^{m \times n}$ be the corresponding weight matrix after fine-tuning. We define the weight update as $\Delta W = W_{ft} - W$.
We begin by performing a full Singular Value Decomposition (SVD) on the pre-trained weight matrix $W$:
\begin{equation}
  W = U\Sigma V^\top
\end{equation}
where $U \in \mathbb{R}^{m \times m}$ and $V \in \mathbb{R}^{n \times n}$ are orthonormal matrices containing left and right singular vectors, respectively, and $\Sigma \in \mathbb{R}^{m \times n}$ contains the singular values in descending order.
The first $r$ diagonal elements of $\Sigma$ are non-zero, where $r = \operatorname{rank}(W)$, while the remaining elements are zero.
According to Theorem~\ref{thm:matrix_subspace}, we can leverage the properties of singular value decomposition (SVD) to gain a deeper understanding of the fine-tuning process. This theorem states that any matrix $A \in \mathbb{R}^{m \times n}$ can be decomposed into a sum of rank-one matrices using the left singular vectors $\{u_i\}_{i=1}^{m}$ and right singular vectors $\{v_i\}_{i=1}^{n}$ as bases:
\begin{equation}
  A = \sum_{i=1}^{m}\sum_{j=1}^{n} \alpha_{ij} u_i v_j^\top = U \Sigma_A V^\top
\end{equation}
where $\alpha_{ij} = \langle A, u_i v_j^\top \rangle$ is the projection of $A$ onto the basis $u_i v_j^\top$, $\Sigma_A$ is a real matrix and $\Sigma_A(i, j) = \alpha_{ij}$.
This decomposition provides a powerful framework for analyzing the fine-tuning process.
When we fine-tune a pre-trained model, we can interpret the weight updates $\Delta W$ as modifications to the singular value matrix $\Sigma$, while the singular vectors $U$ and $V$ remain constant.
Then we partition the singular matrix $\Sigma$ into three zones:
\begin{itemize}
  \item \textbf{Zone I \& Subspace I:}
        $\{1, \ldots, r_{half}\}$, where $r_{half}$ is chosen such that $\sum_{i=1}^{r_{half}} \sigma_i \approx \frac{1}{2}\sum_{i=1}^{r} \sigma_i$.
        The basis of this subspace is $\{ u_i v_j^\top | 1\leq i,j \leq r_{half} \}$. The projection merged weights in this subspace can be computed as follows:
        \begin{equation}
          W_I = W + \sum_{i=1}^{r_{half}} \sum_{j=1}^{r_{half}} \langle \Delta W, u_i v_j^\top \rangle u_i v_j^\top
          = W + U_{r_{half}} U_{r_{half}}^\top \Delta W V_{r_{half}} V_{r_{half}}^\top.
        \end{equation}
        Where $U_{r_{half}}$ and $V_{r_{half}}$ are the first $r_{half}$ columns of $U$ and $V$, respectively.
  \item \textbf{Zone II \& Subspace II:}
        $\{r_{half}+1, \ldots, r\}$, where $r = \operatorname{rank}(W)$.
        The basis of this subspace is $\{u_i v_i^\top\}_{i=r_{half}+1}^{r}$. The basis of subspace II \& III is $\{u_i v_j^\top | r_{half}+1\leq i \leq r, r_{half}+1\leq j \leq r\}$. The projection merged weights in this subspace can be computed as follows:
        \begin{equation}
          W_{II} = W + \sum_{i=r_{half}+1}^{r} \sum_{j=r_{half}+1}^{r} \langle \Delta W, u_i v_j^\top \rangle u_i v_j^\top
          = W + U_{r_{half}+1:r} U_{r_{half}+1:r}^\top \Delta W V_{r_{half}+1:r} V_{r_{half}+1:r}^\top.
        \end{equation}
        Where $U_{r_{half}+1:r}$ and $V_{r_{half}+1:r}$ are the $(r_{half}+1)$-th to $r$-th columns of $U$ and $V$, respectively.
  \item \textbf{Zone III \& Subspace II + III:}
        The basis of this subspace is $\{u_i v_j^\top | r+1\leq i \leq m, r+1\leq j \leq n\}$ and the projection merged weights in this subspace can be computed as follows:
        \begin{equation}
          W_{II+III} = W + \sum_{i=r+1}^{m} \sum_{j=r+1}^{n} \langle \Delta W, u_i v_j^\top \rangle u_i v_j^\top
          = W + U_{r+1:m} U_{r+1:m}^\top \Delta W V_{r+1:n} V_{r+1:n}^\top.
        \end{equation}
        Where $U_{r+1:m}$ and $V_{r+1:n}$ are the $(r+1)$-th to $m$-th columns of $U$ and $(r+1)$-th to $n$-th columns of $V$.
\end{itemize}

We then evaluate the performance of these modified weight matrices on the downstream tasks.
The accuracy comparison in Figure~\ref{fig:single-task} is obtained by using these modified weight matrices in place of the original pre-trained weights.
For other layers instead of the linear layers, we keep the pre-trained weights unchanged.

\section{The Gradient Flow During Fine-tuning}
\label{sec:gradient_flow}

In this section, we analyze the gradient flow during the fine-tuning process to gain insights into how linear layers in a deep neural network adapt to new tasks.
We decompose a fine-tuned deep neural network $f$ into three components: the pre-linear layers $f_{pre}$, the linear layers $f_{linear}(x) = W_{ft} x + b_{ft}$, and the post-linear layers $f_{post}$. Therefore, the output of the network can be expressed as $f(x) = f_{post}(f_{linear}(f_{pre}(x)))$.
Without loss of generality, the pre-linear layers $f_{pre}$ can be dropped, as our focus is on the fine-tuning process of the linear layer $f_{linear}$.

During fine-tuning, the model parameters are updated by minimizing a loss function $\mathcal{L}$ with respect to the model weights.
Using stochastic gradient descent (SGD) as the optimization algorithm, the weight update $\Delta W$ and bias update $\Delta b$ at each step can be expressed as:
\begin{align}
  \label{eq:sgd_weight_update}
  W^{(t+1)} - W^{(t)} & = - \eta \nabla_W \mathcal{L}\left(f_{post}\left(W^{(t)} x+b^{(t)}\right)\right), \,
  b^{(t+1)} - b^{(t)} = - \eta \nabla_b \mathcal{L}\left(f_{post}\left(W^{(t)} x+b^{(t)}\right)\right).
\end{align}
Where $\eta$ is the learning rate, and $\nabla_W \mathcal{L}$ and $\nabla_b \mathcal{L}$ are the gradients of the loss function with respect to the weights and biases, respectively.
In Eq.~(\ref{eq:sgd_weight_update}), we omit the pre-linear layers $f_{pre}$ and use $x$ as the input to the linear layer $f_{linear}$ for simplicity.
Let $y = W x + b$ be the output of the linear layer, $\mathcal{L}' = \mathcal{L} \circ f_{post}$ be the composed loss function.
Starting from the SGD update rule for the weights, we have:
\begin{align}
  W^{(t+1)} - W^{(t)} & = - \eta \nabla_W \mathcal{L}'\left(W^{(t)} x+b^{(t)}\right)                 \\
                      & = - \eta \nabla_y \mathcal{L}' \cdot \nabla_W \left(W^{(t)} x+b^{(t)}\right) \\
                      & = - \eta \nabla_y \mathcal{L}' \cdot x^T,
\end{align}
In practice, we typically use mini-batch SGD, where we average the gradients over a batch of samples. We can represent this as an expectation:
\begin{align}
  W^{(t+1)} - W^{(t)} = - \eta \mathbb{E}_{x \sim p(x)} [\nabla_y \mathcal{L} \cdot x^T]
\end{align}
Given this gradient update rule, we can analyze how it relates to our choice of routing logits in SMILE. Recall that we defined the routing logits as:
\begin{equation}
  r^{(i)} = \left\| V_{k_{gate}}^{(i)^\top} x \right\|_2
\end{equation}
Let's consider the expected weight update over many iterations when $W_{ft}$ is close to $W$:
\begin{align}
  \mathbb{E}[\Delta W] & \approx -\eta \mathbb{E}_{x \sim p(x)} [\nabla_y \mathcal{L} \cdot x^T] \\
                       & = -\eta \mathbb{E}_{x \sim p(x)} [g \cdot x^T]
\end{align}
where $g = \nabla_y \mathcal{L}$ is the gradient of the loss with respect to the output of the linear layer.
Now, let's consider the singular value decomposition (SVD) of the expected weight update:
\begin{equation}
  \mathbb{E}[\Delta W] = U\Sigma V^T
\end{equation}
The right singular vectors $V$ represent the directions in the input space that are most important for the weight updates.
Our routing logits $r^{(i)}$ are based on projecting the input onto the top $k_{gate}$ right singular vectors of the fine-tuned weight difference $\Delta W^{(i)}$. This choice is justified because:
the right singular vectors of $\Delta W^{(i)}$ are likely to be similar to those of $\mathbb{E}[\Delta W]$, as both represent important directions for task-specific updates. In addition, by projecting onto these vectors, we're measuring how much an input aligns with the directions that were most important during fine-tuning for a specific task.

\textbf{A NTK perspective.}
On the other hand, we can analyze the fine-tuning process from a neural tangent kernel (NTK) perspective.
Following~\citep{tangParameterEfficientMultitask2024}, and according to the linear property of the linear layer, we have:
\begin{align}
  f_{linear} & \left(x; \phi_{ft}\right) - f_{linear}\left(x; \phi\right)                                                                                                                                                                  \\
             & = \nabla_{\phi} f_{linear}\left(x; \phi\right)^\top \left(\phi_{ft} - \phi\right)                                                                                                                                           \\
             & \approx - \eta \mathbb{E}_{x'\sim p(x)} \left[\nabla_{\phi} f_{linear}\left(x; \phi\right)^\top \nabla_{\phi} \mathcal{L}'\left(f_{linear}\left(x'; \phi\right)\right)\right]                                               \\
             & = - \eta \mathbb{E}_{x'\sim p(x)} \left[\nabla_{\phi} f_{linear}\left(x; \phi\right)^\top \nabla_{\phi} f_{linear}\left(x'; \phi\right) \nabla_{f_{linear}} \mathcal{L}'\left(f_{linear}\left(x'; \phi\right)\right)\right] \\
             & = - \eta \mathbb{E}_{x'\sim p(x)} \left[ \mathbf{K}(x, x'; \phi) \nabla_{f_{linear}} \mathcal{L}'\left(f_{linear}\left(x'; \phi\right)\right) \right],
\end{align}
Where $\phi$ denotes the pre-trained parameters of the $f_{linear}$, i.e. $W$ and $b$. $\phi_{ft}$ denotes the fine-tuned parameters $W_{ft}$ and $b_{ft}$.
$\mathbf{K}(x, x'; \phi) = \langle \nabla_{\phi} f_{linear}\left(x; \phi\right), \nabla_{\phi} f_{linear}\left(x'; \phi\right) \rangle$ is the neural tangent kernel (NTK)~\citep{jacotNeuralTangentKernel2018} of the linear layer $f_{linear}$, and $\mathcal{L}' = \mathcal{L} \circ f_{post}$ is the composed loss function.
Note that for given $x$, $\mathbf{K}(x, x'; \phi)$ is a constant matrix.

\section{Fine-Tuned Model Performance}
\label{sec:individuals}

\begin{table}
  \centering
  \caption{Performance of fine-tuned CLIP-ViT-B/32 models on eight downstream tasks.}
  \label{tab:individuals-clip-vit-b-32}
  \begin{tabular}{lcccccccc}
    \toprule
    \textbf{Model} & \textbf{SUN397} & \textbf{Cars} & \textbf{RESISC45} & \textbf{EuroSAT} & \textbf{SVHN} & \textbf{GTSRB} & \textbf{MNIST} & \textbf{DTD}  \\
    \midrule
    Pre-trained    & 63.2            & 59.8          & 60.7              & 46.0             & 31.6          & 32.5           & 48.3           & 43.9          \\
    SUN397         & \textbf{75.0}   & 47.0          & 54.3              & 46.5             & 28.3          & 26.4           & 44.3           & 41.6          \\
    Cars           & 56.6            & \textbf{78.3} & 50.9              & 38.4             & 30.2          & 30.6           & 49.7           & 41.8          \\
    RESISC45       & 52.0            & 47.2          & \textbf{95.2}     & 56.9             & 23.9          & 24.3           & 39.7           & 35.9          \\
    EuroSAT        & 49.0            & 39.9          & 33.5              & \textbf{99.0}    & 11.8          & 22.9           & 33.8           & 35.5          \\
    SVHN           & 40.5            & 36.3          & 18.9              & 9.8              & \textbf{97.3} & 27.3           & 81.8           & 23.2          \\
    GTSRB          & 36.8            & 33.0          & 20.6              & 21.3             & 41.2          & \textbf{98.9}  & 30.9           & 23.9          \\
    MNIST          & 50.3            & 40.0          & 31.3              & 17.7             & 50.1          & 19.3           & \textbf{99.6}  & 30.7          \\
    DTD            & 54.6            & 51.3          & 36.9              & 25.0             & 28.9          & 21.8           & 47.3           & \textbf{79.7} \\
    \bottomrule
  \end{tabular}
\end{table}

\begin{table}
  \centering
  \caption{Performance of fine-tuned CLIP-ViT-L/14 models on eight downstream tasks.}
  \label{tab:individuals-clip-vit-l-14}
  \begin{tabular}{lcccccccc}
    \toprule
    \textbf{Model} & \textbf{SUN397} & \textbf{Cars} & \textbf{RESISC45} & \textbf{EuroSAT} & \textbf{SVHN} & \textbf{GTSRB} & \textbf{MNIST} & \textbf{DTD}  \\
    \midrule
    Pre-trained    & 68.3            & 77.8          & 71.0              & 58.9             & 58.4          & 50.6           & 76.4           & 55.5          \\
    SUN397         & \textbf{82.8}   & 68.4          & 58.1              & 49.9             & 55.0          & 46.3           & 79.5           & 52.8          \\
    Cars           & 67.8            & \textbf{92.9} & 68.7              & 56.4             & 51.7          & 47.7           & 80.5           & 55.6          \\
    RESISC45       & 65.6            & 69.0          & \textbf{97.4}     & 64.3             & 38.3          & 46.6           & 77.7           & 49.9          \\
    EuroSAT        & 65.2            & 69.0          & 40.6              & \textbf{99.2}    & 33.4          & 45.6           & 73.5           & 47.1          \\
    SVHN           & 66.4            & 69.0          & 54.0              & 19.7             & \textbf{97.9} & 48.7           & 92.2           & 50.1          \\
    GTSRB          & 63.4            & 64.8          & 38.7              & 19.6             & 71.0          & \textbf{99.2}  & 75.1           & 45.8          \\
    MNIST          & 56.0            & 49.8          & 53.5              & 26.6             & 48.2          & 33.1           & \textbf{99.8}  & 47.1          \\
    DTD            & 66.8            & 75.3          & 65.5              & 43.7             & 49.5          & 45.0           & 68.5           & \textbf{85.5} \\
    \bottomrule
  \end{tabular}
\end{table}

\begin{table}
  \centering
  \caption{Performance of full fine-tuned Flan-T5-Base models on eight downstream tasks.}
  \label{tab:individuals-flan-t5-base}
  \begin{tabular}{lcccccccc}
    \toprule
    \textbf{Model} & \textbf{CoLA} & \textbf{MNLI} & \textbf{MRPC} & \textbf{QNLI} & \textbf{QQP}  & \textbf{RTE}  & \textbf{SST2} & \textbf{STSB} \\
    \midrule
    Pre-trained    & 69.1          & 56.5          & 76.2          & 88.4          & 82.1          & 80.1          & 91.2          & 62.2          \\
    CoLA           & \textbf{75.0} & 37.2          & 72.8          & 87.6          & 80.4          & 76.9          & 91.4          & 63.6          \\
    MNLI           & 65.9          & \textbf{83.4} & 75.7          & 89.2          & 82.6          & 78.0          & 90.6          & 66.2          \\
    MRPC           & 63.4          & 48.3          & \textbf{87.5} & 85.8          & 81.1          & 72.6          & 88.1          & 76.1          \\
    QNLI           & 68.7          & 39.2          & 75.5          & \textbf{91.5} & 81.3          & 78.3          & 91.6          & 68.2          \\
    QQP            & 59.1          & 50.4          & 73.8          & 88.3          & \textbf{85.4} & 81.2          & 90.8          & 75.9          \\
    RTE            & 65.4          & 51.1          & 69.6          & 88.7          & 80.8          & \textbf{85.9} & 90.3          & 68.9          \\
    SST2           & 67.8          & 54.0          & 76.5          & 87.8          & 83.4          & 80.5          & \textbf{93.6} & 63.6          \\
    STSB           & 69.3          & 49.3          & 76.5          & 89.0          & 81.7          & 77.6          & 90.1          & \textbf{88.7} \\
    \bottomrule
  \end{tabular}
\end{table}

\begin{table}
  \centering
  \caption{Performance of LoRA fine-tuned ($r_{LoRA}=16$) Flan-T5-Base models on eight downstream tasks.}
  \label{tab:individuals-flan-t5-base_lora16}
  \begin{tabular}{lcccccccc}
    \toprule
    \textbf{Model} & \textbf{CoLA} & \textbf{MNLI} & \textbf{MRPC} & \textbf{QNLI} & \textbf{QQP}  & \textbf{RTE}  & \textbf{SST2} & \textbf{STSB} \\
    \midrule
    Pre-trained    & 69.1          & 56.5          & 76.2          & 88.4          & 82.1          & 80.1          & 91.2          & 62.2          \\
    CoLA           & 69.1          & 39.9          & 75.2          & 89.1          & 81.1          & 81.9          & 90.7          & 54.0          \\
    MNLI           & \textbf{69.4} & \textbf{82.7} & 73.8          & 89.3          & 82.0          & 79.4          & 90.9          & 68.1          \\
    MRPC           & 64.0          & 44.9          & \textbf{85.5} & 82.6          & 81.0          & 69.0          & 88.6          & 73.6          \\
    QNLI           & 68.9          & 52.7          & 76.7          & \textbf{90.9} & 82.8          & 79.8          & 91.5          & 68.9          \\
    QQP            & 65.0          & 54.6          & 75.7          & 89.0          & \textbf{84.0} & 81.6          & 90.7          & 75.3          \\
    RTE            & 64.9          & 51.8          & 69.4          & 89.2          & 79.8          & \textbf{84.5} & 90.6          & 70.1          \\
    SST2           & 68.3          & 56.6          & 76.0          & 88.5          & 83.4          & 79.8          & \textbf{92.9} & 62.6          \\
    STSB           & 65.7          & 1.7           & 67.4          & 89.3          & 80.1          & 79.8          & 90.8          & \textbf{87.4} \\
    \bottomrule
  \end{tabular}
\end{table}

In this section, we present the performance of the fine-tuned models on their corresponding test sets.
These results serve as a baseline for evaluating the effectiveness of our proposed model fusion technique.

Tables \ref{tab:individuals-clip-vit-b-32} and \ref{tab:individuals-clip-vit-l-14} show the performance of fine-tuned CLIP-ViT-B/32 and CLIP-ViT-L/14 models, respectively, on eight image classification tasks.
These tasks include SUN397~\citep{xiaoSUNDatabaseLargescale2010}, Cars~\citep{krause3DObjectRepresentations2013}, RESISC45~\citep{chengRemoteSensingImage2017}, EuroSAT~\citep{helberEuroSATNovelDataset2019}, SVHN~\citep{netzerReadingDigitsNatural2011}, GTSRB~\citep{stallkampManVsComputer2012}, MNIST~\citep{lecunGradientbasedLearningApplied1998}, and DTD~\citep{cimpoiDescribingTexturesWild2014}.
For image classification tasks, we report the classification accuracy.
Tables \ref{tab:individuals-flan-t5-base} and \ref{tab:individuals-flan-t5-base_lora16} present the performance of Flan-T5-Base models on eight text generation tasks from GLUE benchmark~\citep{wangGLUEMultiTaskBenchmark2018}, using full fine-tuning and LoRA fine-tuning ($r_{LoRA}=16$) respectively.
We report Spearman’s $\rho$ for STSB and exact match accuracy for other tasks.
In particular for STSB, if the text outputs can not be parsed into a valid float number, we assign a score of zero.
The datasets and fine-tuned models are accessible to the public on HuggingFace~\citep{wolfHuggingFaceTransformersStateoftheart2020}. For further information, please consult FusionBench~\citep{tangFusionBenchComprehensiveBenchmark2024}.

Several key observations can be made from these results:
\begin{enumerate}
  \item \textit{Task-Specific Improvements}:
        Across all model types, fine-tuning consistently improves performance on the target task compared to the pre-trained model. This demonstrates the effectiveness of task-specific adaptation.
  \item \textit{Negative Transfer}:
        In some cases, we observe negative transfer, where fine-tuning on one task harms performance on another task. For example, in Table \ref{tab:individuals-clip-vit-b-32}, the SVHN-tuned model performs worse on EuroSAT (9.8\%) compared to the pre-trained model (46.0\%).
  \item \textit{Task Relatedness}:
        Some fine-tuned models show improved performance on related tasks.
        For example, in Table \ref{tab:individuals-flan-t5-base}, the MNLI-tuned model performs well on QNLI, suggesting a transfer of relevant knowledge between these natural language inference tasks.
  \item \textit{Varying Task Difficulty}:
        The diagonal entries reveal that some tasks are inherently more challenging than others. For instance, in the CLIP-ViT models, EuroSAT and MNIST consistently achieve very high accuracy, while SUN397, Cars and DTD prove more challenging.
  \item \textit{Model Size Impact}:
        Comparing CLIP-ViT-B/32 and CLIP-ViT-L/14 results, we generally see improved performance with the larger model, indicating that model capacity plays a role in both task-specific performance and generalization.
\end{enumerate}

\section{Hyperparameter Analysis}
\label{sec:hyperparameter-analysis}

\begin{figure}
  \begin{subfigure}{0.44\textwidth}
    \centering
    \includegraphics[width=\linewidth]{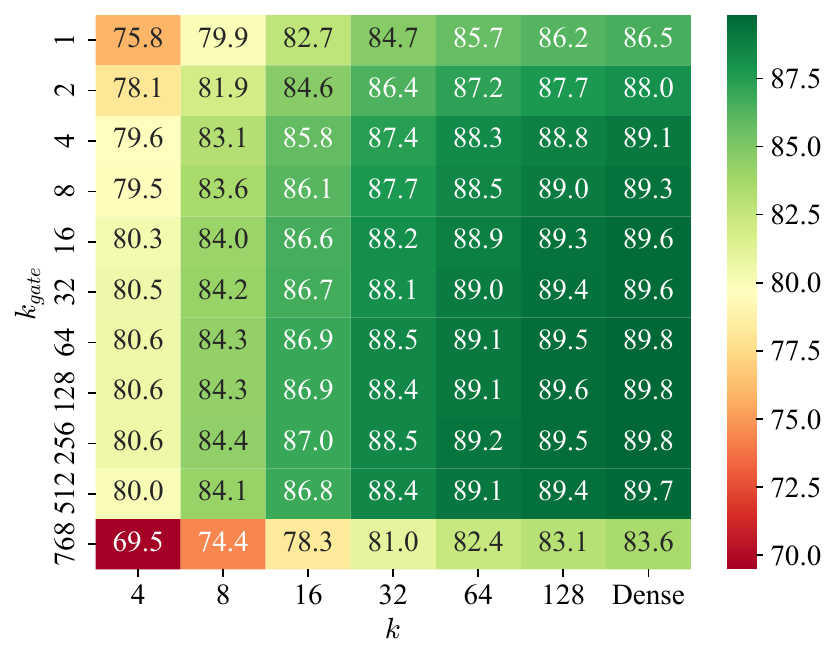}
    \caption{Heatmap of average accuracy.}
    \label{fig:clip-vit-b-32_hp-acc}
  \end{subfigure}%
  \begin{subfigure}{0.55\textwidth}
    \centering
    \includegraphics[width=\linewidth]{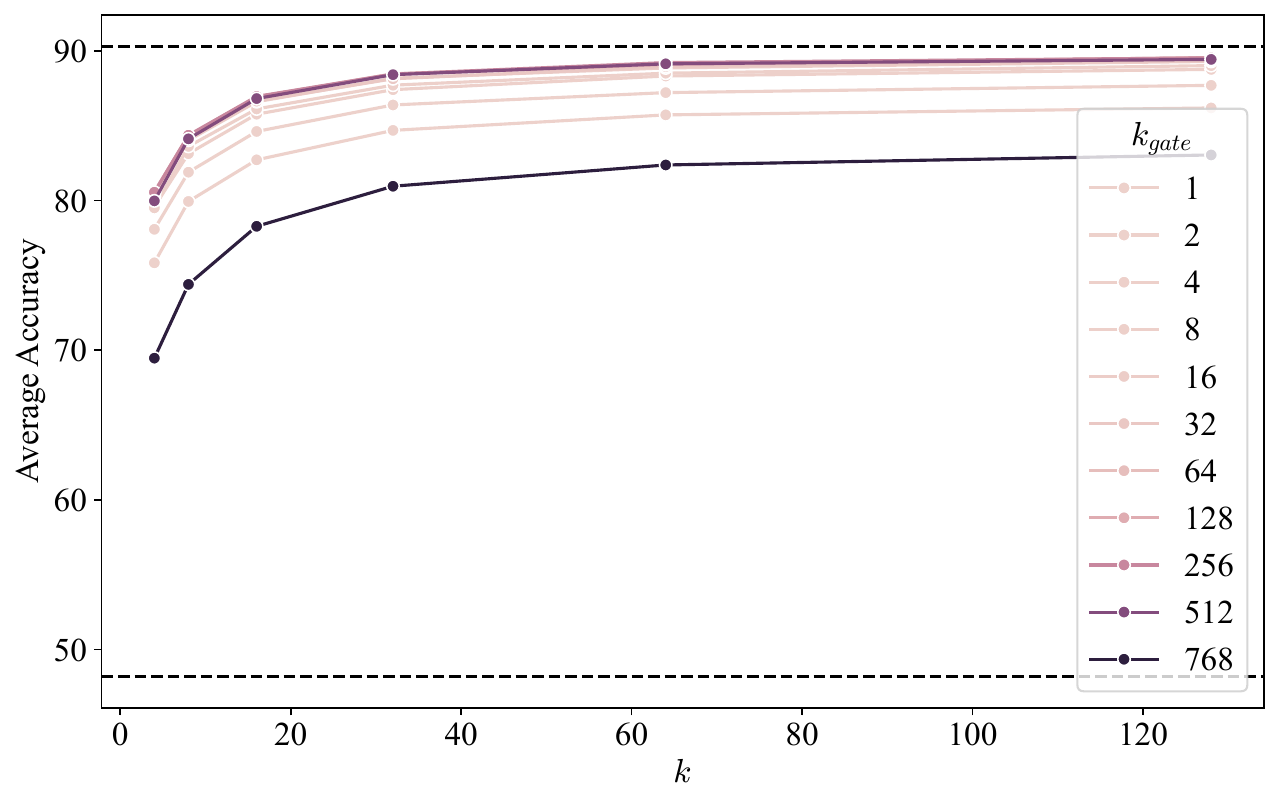}
    \caption{Line plot of average accuracy.}
    \label{fig:clip-vit-b-32_line-acc}
  \end{subfigure} \\
  \begin{subfigure}{0.44\textwidth}
    \centering
    \includegraphics[width=\linewidth]{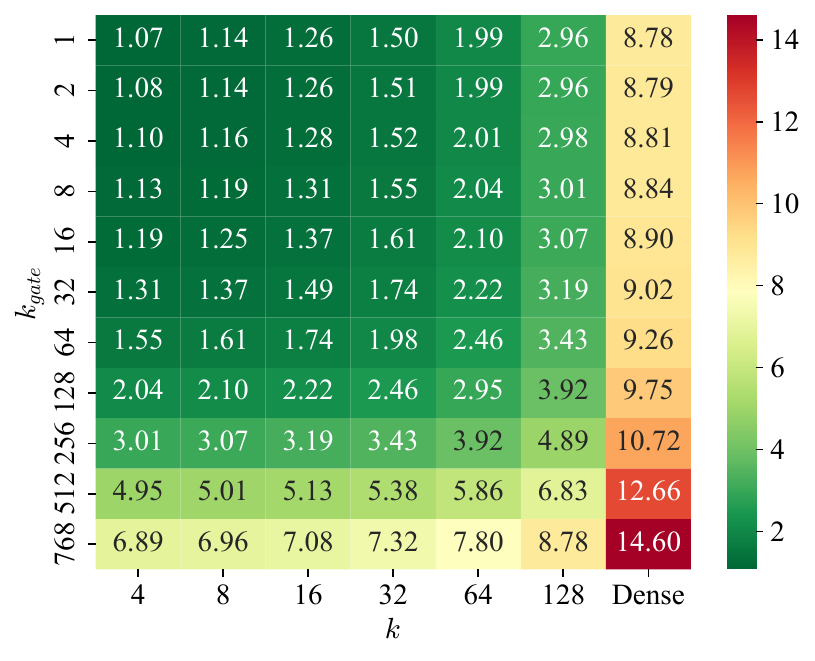}
    \caption{Heatmap of the normalized number of parameters.}
    \label{fig:clip-vit-b-32_hp-params}
  \end{subfigure}%
  \begin{subfigure}{0.55\textwidth}
    \centering
    \includegraphics[width=\linewidth]{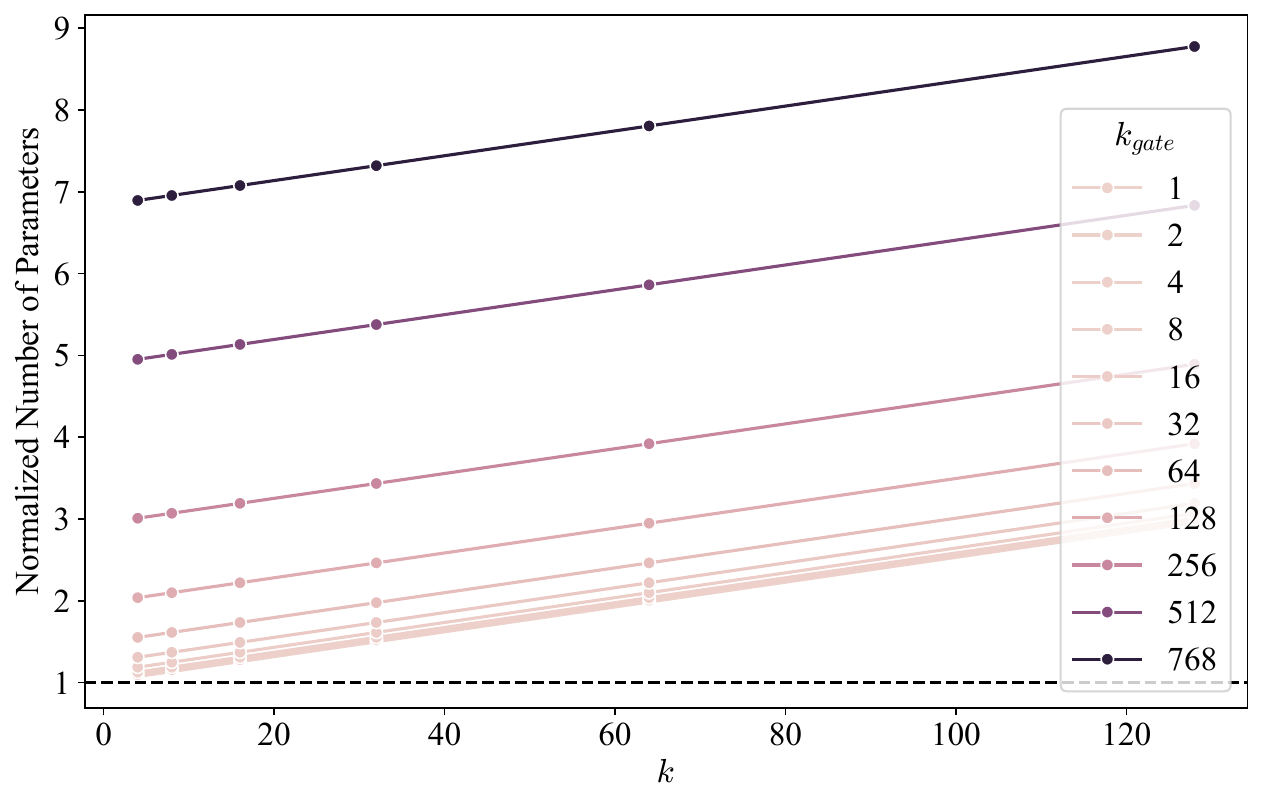}
    \caption{Line plot of normalized number of parameters.}
    \label{fig:clip-vit-b-32_line-params}
  \end{subfigure}
  \caption{Hyperparameter analysis of the CLIP-ViT-B/32 model on eight image classification datasets.
    Here we show how different values of hyperparameters $k$ and $k_{gate}$ affect the average performance and the number of parameters (normalized by the number of parameters in the original model) in the upscaled model.
    We also show the average accuracy of pre-trained models and individual fine-tuned models in subfigure (b).}
  \label{fig:clip-vit-b-32_hp}
\end{figure}

\begin{figure}
  \begin{subfigure}{0.43\textwidth}
    \centering
    \includegraphics[width=\linewidth]{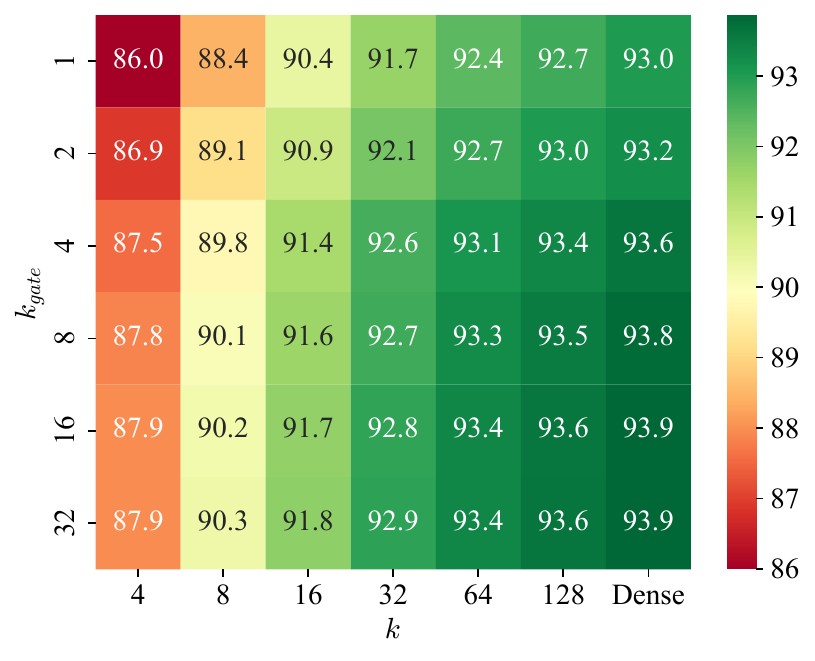}
    \caption{Heatmap of average performance.}
    \label{fig:clip-vit-l-14_hp-acc}
  \end{subfigure}%
  \hfill
  \begin{subfigure}{0.55\textwidth}
    \centering
    \includegraphics[width=\linewidth]{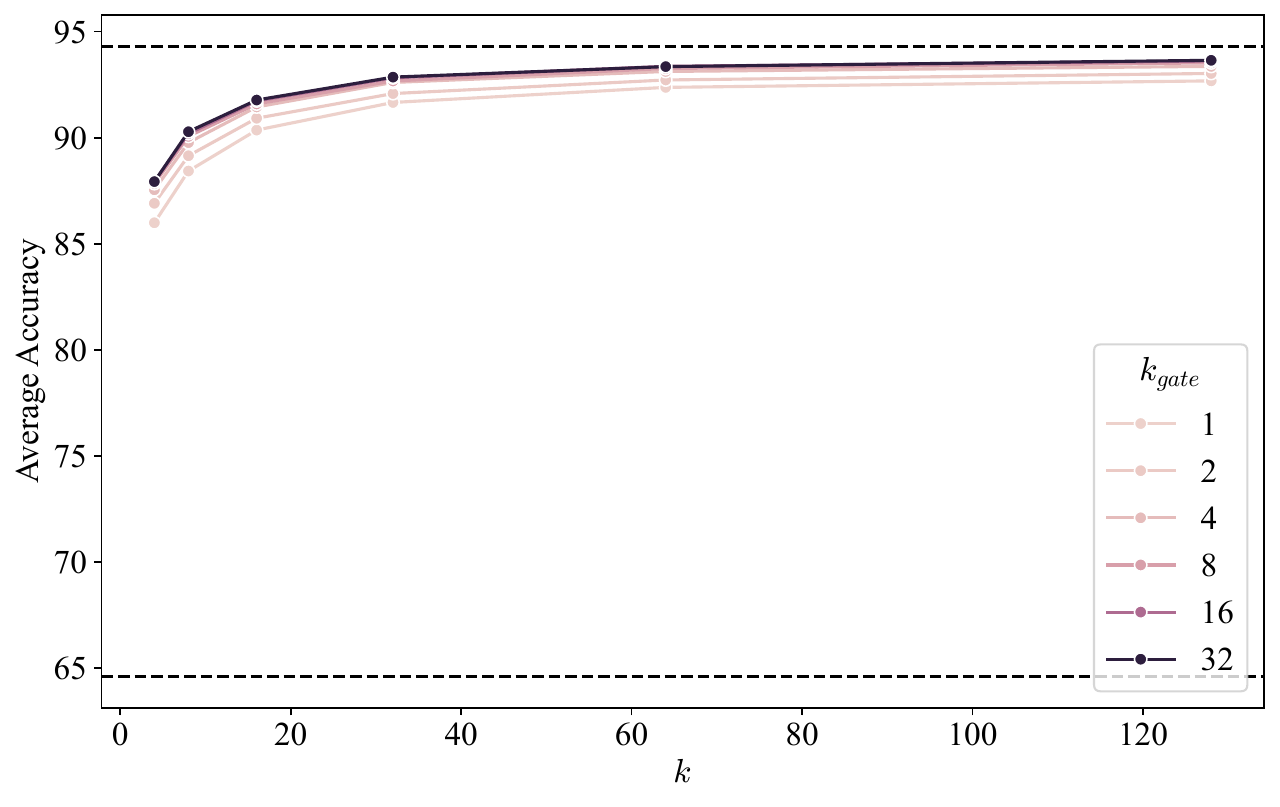}
    \caption{Line plot of average performance.}
    \label{fig:clip-vit-l-14_line-acc}
  \end{subfigure} \\
  \begin{subfigure}{0.42\textwidth}
    \centering
    \includegraphics[width=\linewidth]{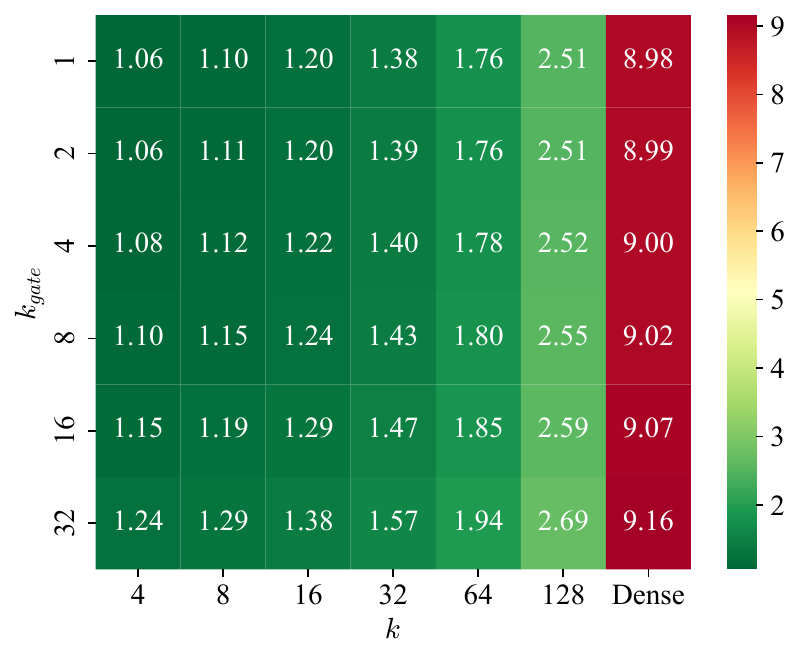}
    \caption{Heatmap of normalized parameter count.}
    \label{fig:clip-vit-l-14_hp-params}
  \end{subfigure}%
  \hfill
  \begin{subfigure}{0.55\textwidth}
    \centering
    \includegraphics[width=\linewidth]{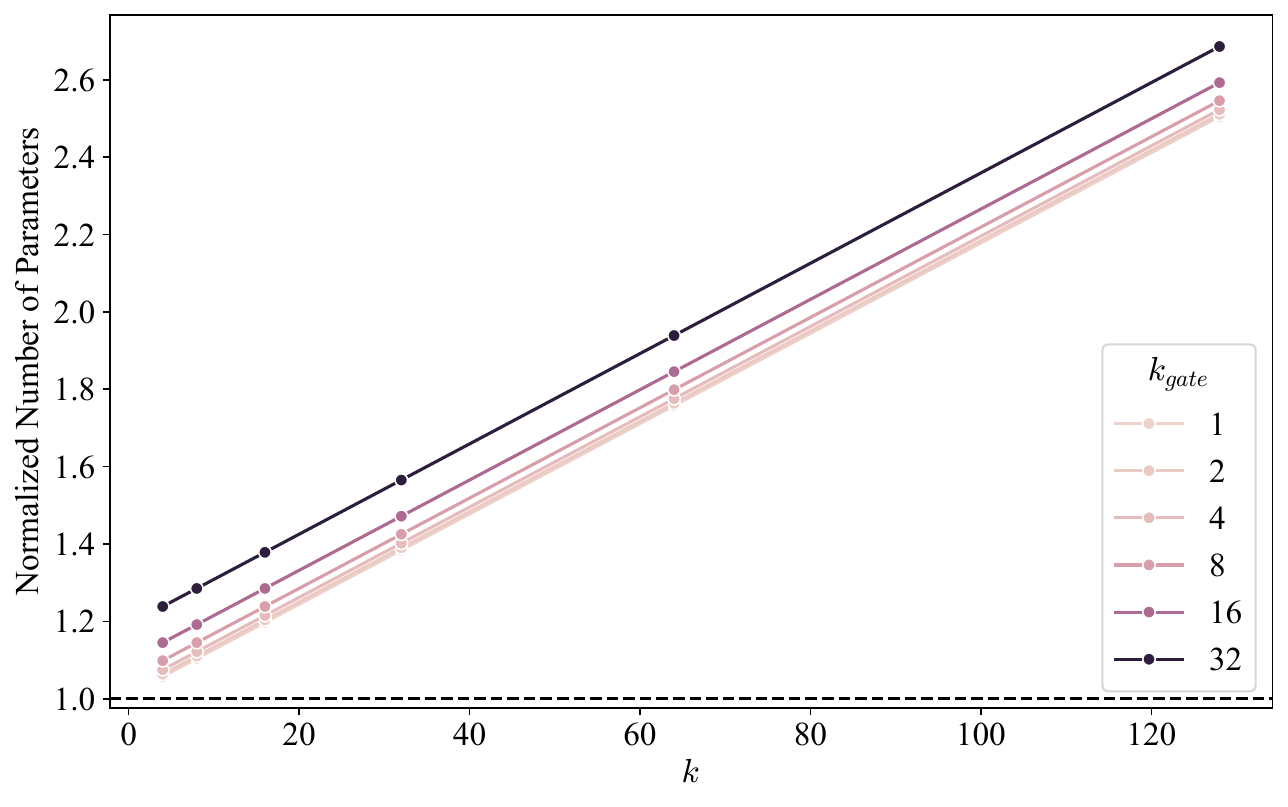}
    \caption{Line plot of normalized parameter count.}
    \label{fig:clip-vit-l-14_line-params}
  \end{subfigure}
  \caption{Hyperparameter analysis of the CLIP-ViT-L/14 model on eight image classification datasets.
    Here we show how different values of hyperparameters $k$ and $k_{gate}$ affect the average performance and the number of parameters (normalized by the number of parameters in the original model) in the upscaled model.
    We also show the average accuracy of pre-trained models and individual fine-tuned models in subfigure (b).}
  \label{fig:clip-vit-l-14_hp}
\end{figure}

In this section, we present a comprehensive analysis of the hyperparameters $k$ and $k_{gate}$ for the CLIP-ViT-B/32 and CLIP-ViT-L/14 models across eight image classification datasets. We examine their impact on both model performance (average accuracy) and model complexity (number of parameters).
We also test the extreme cases when $k = \infty$, which corresponds to full-rank experts, denoted as ``Dense'' in the figures.
We normalize the number of parameters by the number of parameters in the original model (87.5M for CLIP-ViT-B/32 and 303M for CLIP-ViT-L/14) to facilitate comparison.

\textbf{CLIP Models.}
From Figures~\ref{fig:clip-vit-b-32_hp} and~\ref{fig:clip-vit-l-14_hp}, we observe that
(1) The performance of the upscaled models is generally better than the pre-trained models, which demonstrates the effectiveness of our fine-tuning strategy.
(2) increasing the values of \( k \)  generally improves the performance of both the CLIP-ViT-B/32 and CLIP-ViT-L/14 models, though at the cost of increased model complexity.
(3) Increased values of \( k_{gate} \) improve the performance of the upscaled models at the beginning, but the performance starts to decrease when \( k_{gate} \) is too large. This observation is consistent with our discussion in Section~\ref{sec:sparse_mixture_of_experts} that a larger \( k_{gate} \) may result in a less discriminative gating mechanism.
(4) Better performance preservation can be achieved with the CLIP-ViT-L/14 model than with the CLIP-ViT-B/32 model, which is consistent with our discussion in Section~\ref{sec:parameter_interference} that the larger model has more dimension redundancy and is less severe to the parameter interference problem.

In practice, when selecting hyperparameters for the upscaled models, it is crucial to balance the trade-off between performance and parameter overhead.
Take CLIP-ViT-B/32 as an example, a good trade-off between performance and parameter overhead can be achieved with \( k \approx 32 \) and \( k_{gate} \approx 16 \).
For the CLIP-ViT-L/14 model, $ k \approx 64$ and $ k_{gate} \approx 8$ are recommended.
By doing so, we obtain a multi-task model that achieves around 98\% of the performance of the fine-tuned model with only 20\% of the total parameters compared to maintaining eight individual fine-tuned models for each task.
Note that the upscaled SMILE model is sparsely inferenced, increasing the number of parameters by $N$ only increases the activated parameters by about $N/T$ for each token.
Even with an extreme focus on storage and inference costs, only 7\% of the parameters overhead can achieve an average multi-task performance of about 90\% of the individual fine-tuned models.

\textbf{Flan-T5 Models.}
Figure~\ref{fig:flan-t5-base_hp} shows the hyperparameter analysis of the Flan-T5-Base models on eight tasks from the GLUE benchmark.
We conduct the same analysis for both full fine-tuned models and LoRA fine-tuned models with $r_{LoRA}=16$.
It is observed that the performance is relatively stable, with most configurations yielding accuracies around 85.1 to 85.6 for the full fine-tuned models and around 83.8 to 84.0 for the LoRA fine-tuned models.
Upscaling LoRA fine-tuned models is very parameter-efficient, with the number of parameters increasing by only 2\% to 7\% compared to the original dense model.
For a balanced trade-off between performance and parameter overhead, consider setting $k \approx 32$ and $k_{gate} \approx 8$ for the full fine-tuned model fusion, and $k \approx 8$ and $k_{gate} \approx 2$ for the LoRA fine-tuned model fusion.

\section{Large-Scale Model Experiments}
\label{appendix:large-scale-experiments}

This appendix provides additional details and results for our experiments with large-scale models, specifically the \texttt{Mistral-7B} series.
We used the following models in our experiments, which are available on the HuggingFace:
\begin{itemize}
  \item Base pre-trained model ($M_0$):
        \href{https://huggingface.co/mistralai/Mistral-7B-v0.1}{\texttt{mistralai/Mistral-7B-v0.1}}
  \item Expert model $M_1$:
        \href{https://huggingface.co/meta-math/MetaMath-Mistral-7B}{\texttt{meta-math/MetaMath-Mistral-7B}}
  \item Expert model $M_2$:
        \href{https://huggingface.co/cognitivecomputations/dolphin-2.1-mistral-7b}{\texttt{cognitivecomputations/dolphin-2.1-mistral-7b}}
  \item Expert model $M_3$:
        \href{https://huggingface.co/uukuguy/speechless-code-mistral-7b-v1.0}{\texttt{uukuguy/speechless-code-mistral-7b-v1.0}}
\end{itemize}
\begin{table}
  \centering
  \caption{
    Detailed performance comparison of individual models and various SMILE models with different \( k \) values. For all upscaled models, the \( k_{gate} \) value was set to 8.
  }
  \label{table:appendix-mistral-7b-full}
  \begin{tabular}{lcccc}
    \toprule
    \textbf{Model}                       & \textbf{MMLU}  & \textbf{TruthfulQA} & \textbf{GSM8K} & \textbf{ARC Challenge} \\
    \midrule
    $M_0$ (pre-trained)                  & 59.64          & 42.62               & 38.81          & 53.92                  \\
    $M_1$                                & 60.56          & 44.79               & \textbf{71.49} & 51.02                  \\
    $M_2$                                & 60.56          & \textbf{55.88}      & 56.93          & 57.00                  \\
    $M_3$                                & \textbf{61.18} & 47.47               & 48.98          & \textbf{57.68}         \\
    \midrule
    $M_{0;123}$ $(7.3 \text{B}, k=8)$    & 60.28          & 46.31               & 46.55          & \textbf{55.55}         \\
    $M_{0;123}$ $(7.5 \text{B}, k=32)$   & 60.37          & 49.49               & 55.04          & 54.52                  \\
    $M_{0;123}$ $(8.3 \text{B}, k=128)$  & 60.43          & 50.91               & 63.76          & 54.35                  \\
    $M_{0;123}$ $(9.3 \text{B}, k=256)$  & 60.53          & 51.83               & 65.58          & 54.01                  \\
    $M_{0;123}$ $(11.2 \text{B}, k=512)$ & \textbf{60.66} & \textbf{52.79}      & \textbf{67.85} & 54.35                  \\    \bottomrule
  \end{tabular}
\end{table}

For the SMILE models, the hyperparameter settings were as follows: \( k_{gate} \) was consistently set to 8 across all experiments, while \( k \) ranged from 8 to 512 (including 8, 16, 32, 64, 128, 256, 384, and 512), as shown in Figure \ref{fig:mistral-7b_gsm8k}.
Table \ref{table:appendix-mistral-7b-full} provides a more comprehensive view of the performance of individual models and various SMILE models with different \( k \) values.
We use \href{https://github.com/EleutherAI/lm-evaluation-harness}{\texttt{EleutherAI/lm-evaluation-harness}}~\citep{eval-harness} to evaluate the models on the four tasks: MMLU, TruthfulQA, GSM8K, and ARC Challenge.
We merge the models on host memory and evaluate them on two NVIDIA 4090 GPUs with 24GB of memory each.

It is notable that as the value of $k$ increases, we generally see improved performance, especially in tasks like GSM8K and TruthfulQA.
The results also show a clear trade-off between model size and performance. The SMILE model with $k=8$ (7.3B parameters) already achieves comparable or better results than the pre-trained model on all tasks, while larger models ($k=512$, 11.2B parameters) approach the performance of individual expert models.

\textbf{Limitations and future work for LLMs.}
Here we provide a brief discussion of the limitations of our experiments and potential directions for future work.
\begin{itemize}
  \item \textbf{Limited expert model pool.}
        In the experiments for CLIP models and Flan-t5 models, we use eight expert models to evaluate the performance of the SMILE model. However, in the Mistral-7B experiments, the experiments are currently limited to three expert models, which may not fully demonstrate the method's capabilities with a larger, more diverse set of experts. Future work could explore the impact of additional expert models on the performance of the SMILE model.
  \item \textbf{LoRA fine-tuning.}
        In the experiments, we use full fine-tuned Mistral-7B models as expert models. Where the linear models are upscaled into MoE modules and the remaining parts of the model are copied directly from the pre-trained model.
        The reason for this is that the top Mistral-7B models available on HuggingFace are now fully fine-tuned.
        This approach, however, may not fully exploit SMILE's potential.
        A more effective strategy could involve using LoRA fine-tuned models as expert models. In this scenario, only specific linear layers would be fine-tuned using low-rank techniques, with the rest of the model remaining frozen. This approach could potentially enhance SMILE's efficiency and effectiveness.
        As we have shown in the Flan-T5 experiments, LoRA fine-tuning can significantly reduce the number of additional parameters required to achieve comparable performance.
\end{itemize}

\end{document}